\documentclass{article}

\usepackage{PRIMEarxiv}

\usepackage[utf8]{inputenc} 
\usepackage[T1]{fontenc}    
\usepackage{hyperref}       
\usepackage{url}            

\usepackage{booktabs}       
\usepackage{amsfonts}       
\usepackage{nicefrac}       
\usepackage{microtype}      
\usepackage{lipsum}
\usepackage{fancyhdr}       
\usepackage{graphicx}       
\graphicspath{{media/}}     
\usepackage{times}  
\usepackage{helvet}  
\usepackage{courier}  
\urlstyle{rm} 
\usepackage{natbib}  
\usepackage{caption} 
\frenchspacing  
\setlength{\pdfpagewidth}{8.5in} 
\setlength{\pdfpageheight}{11in} 
\pagestyle{fancy}
\thispagestyle{empty}
\rhead{ \textit{ }} 
\usepackage[linesnumbered,ruled,vlined]{algorithm2e}

\usepackage{newfloat}
\usepackage{listings}
\DeclareCaptionStyle{ruled}{labelfont=normalfont,labelsep=colon,strut=off}
\lstset{%
	basicstyle={\footnotesize\ttfamily},
	numbers=left,numberstyle=\footnotesize,xleftmargin=2em,%
	aboveskip=0pt,belowskip=0pt,%
	showstringspaces=false,tabsize=2,breaklines=true}
\usepackage{float}
\floatstyle{ruled}
\newfloat{listing}{tb}{lst}{}
\floatname{listing}{Listing}
\fancyhead[LO]{Heterogeneous Attributed Graph Learning via Neighborhood-Aware Star Kernels}

\usepackage{tikz}
\usepackage{multirow} 
\usepackage{amsthm}
\usepackage{amsfonts}
\usepackage{amsmath}
\usepackage{booktabs}
\usepackage{subcaption}
\usepackage{textcomp}
\usepackage[font=footnotesize, labelfont=footnotesize]{caption}
\newtheorem{definition}{Definition}
\newtheorem{example}{Example}
\newtheorem{lemma}{Lemma}
\newtheorem{theorem}{Theorem}
\newcommand{\hh}[1]{\textcolor{black}{{#1}}}
\newcommand{\cirnum}[1]{\tikz[baseline=(char.base)]{
\node[shape=circle,draw,inner sep=0.5pt] (char) {\scriptsize #1};
}}

\newcommand{\cirnumnormal}[1]{\tikz[baseline=(char.base)]{
\node[shape=circle,draw,inner sep=0.6pt] (char) {\footnotesize #1};
}}

\usepackage{soul}

\newcommand{\prob}[1]{\textcolor{black}{{#1}}}
\usepackage{bibentry}
\usepackage{placeins}
  
\title{Heterogeneous Attributed Graph Learning via Neighborhood-Aware Star Kernels}

\author{
 Hong Huang, Haiming Chen, Hang Gao \\
  Key Laboratory of System Software (Chinese Academy of Sciences) and State Key Laboratory of Computer Science \\
 Institute of Software, Chinese Academy of Sciences \\
  Beijing, China\\
  \texttt{\{huanghong, chm\}@ios.ac.cn} \\
  \texttt{gaohang@iscas.ac.cn} \\
   \And
  Chengyu Yao \\
  National Key Laboratory of Space Integrated Information System \\
  Institute of Software, Chinese Academy of Sciences \\
  Beijing, China\\
  \texttt{yaochengyu2023@iscas.ac.cn} \\
}
\begin{document}
\maketitle

\begin{abstract}
Attributed graphs, typically characterized by irregular topologies and a mix of numerical and categorical attributes, are ubiquitous in diverse domains such as social networks, bioinformatics and cheminformatics. While graph kernels provide a principled framework for measuring graph similarity, existing kernel methods often struggle to simultaneously capture heterogeneous attribute semantics and neighborhood information in attributed graphs. In this work, we propose the \textbf{Neighborhood-Aware Star Kernel (NASK)}, a novel graph kernel designed for attributed graph learning. NASK leverages an exponential transformation of the Gower similarity coefficient to jointly model numerical and categorical features efficiently, and employs star substructures enhanced by Weisfeiler–Lehman iterations to integrate multi-scale neighborhood structural information. We theoretically prove that NASK is positive definite, ensuring compatibility with kernel-based learning frameworks such as SVMs. Extensive experiments are conducted on eleven attributed and four large-scale real-world graph benchmarks. The results demonstrate that NASK consistently achieves superior performance over sixteen state-of-the-art baselines, including nine graph kernels and seven Graph Neural Networks.
\end{abstract}

\section{Introduction}
As a common type of graph~\cite{cai2023efficient, gao2020time, zhang2019efficient}, attributed graphs consist of nodes and edges with multidimensional attributes, thereby capturing rich structural and attribute information. Owing to these expressive representations, attributed graphs have been widely applied in various domains, including cheminformatics~\cite{li2019deepchemstable}, bioinformatics~\cite{he2017misaga,danaci2018eclerize}, and social networks~\cite{bizer2023linked}. 

Given their popularity, recent years have witnessed a surge in research on effective learning techniques for attributed graphs. Among these techniques, there are two mainstream approaches: Graph Neural Networks (GNNs)~\cite{kipf2016semi} and graph kernel methods~\cite{haussler1999convolution}. Both approaches exhibit relatively distinct characteristics~\cite{morris2023weisfeiler}. Compared to GNNs, graph kernels offer greater stability due to their stronger interpretability, and are capable of capturing richer neighborhood information, as GNN architectures are theoretically no more powerful than the 1-Weisfeiler–Lehman (1-WL) test in distinguishing non-isomorphic graphs~\cite{morris2017glocalized}. However, graph kernel methods often struggle to effectively model multidimensional semantic information and tend to incur higher computational costs when applied to large-scale and high-dimensional datasets~\cite{liu2024exploring}. 

Many efforts in graph kernel research have aimed to enhance the ability to capture rich attribute semantics. Some methods designed for attributed graphs rely on predefined substructures. However, a subset of these methods~\cite{mahe2004extensions,kriege2012subgraph,orsini2015graph} often suffer from high computational complexity~\cite{salim2022neighborhood}, while others~\cite{feragen2013scalable,da2017tree} lack well-defined node correspondence and fail to effectively incorporate node and edge label information, thus overlooking crucial structural differences.

In recent years, substantial efforts have been devoted to learning heterogeneous attributes that contain a mix of categorical and numerical features.  \prob{Neumann \textit{et al}~\cite{neumann2012efficient} and Morris \textit{et al.}~\cite{morris2016faster} enable kernel computation on attributed graphs by applying discretization techniques, which, despite relatively scalable, often compromise the fidelity of attribute semantics due to the transformation of original attribute values}. Togninalli \textit{et al}.~\cite{togninalli2019wasserstein} proposed the WWL kernel, which utilizes the Wasserstein distance based on node feature representations to model heterogeneous attributes. To mitigate the high cost of Wasserstein distance, Isolation Graph Kernel~\cite{xu2021isolation} uses a distribution-based kernel via kernel mean embedding, offering better accuracy and faster runtime. Nevertheless, both methods lack formal guarantees of positive definiteness when applied to numerical features, which may affect robustness. Salim \textit{et al}.~\cite{salim2022neighborhood} proposed a kernel method defined as a weighted sum of two kernels: an R-convolution kernel that captures attribute information, and an optimal assignment kernel that models label information. However, this direct combination still inevitably results in the loss of important semantic information, particularly when dealing with categorical and numerical features simultaneously.

Motivated by aforementioned works, we propose the \textbf{Neighborhood-Aware Star Kernel (NASK)}—a novel positive definite graph kernel that effectively captures both heterogeneous attribute semantics and \hh{neighborhood} structural information in attributed graphs. First, we design an attribute similarity function $\mathsf{P}$ based on an exponential transformation of the Gower similarity coefficient~\cite{gower1971general}, \hh{which preserves its ability to jointly model numerical and categorical features, while maintaining lower computational costs compared to one-hot encoding~\cite{lautrup2025syntheval}}. 
Furthermore, we prove the positive definiteness of $\mathsf{P}$, demonstrating its stability and generalizability. Second, our design employs star substructures (i.e., star subgraphs) as local structural information. These structures are prevalent in real-world attributed graphs and play a crucial role in capturing high-level semantics~\cite{liu2022cspm,qin2024colorful}. Additionally, all star substructures counting can be performed in linear time~\cite{ahmed2017graphlet}, ensuring the scalability of the method. To avoid limiting the kernel to purely fixed substructure, we incorporate the WL algorithm~\cite{Weisfeiler1968} into NASK, enabling iterative aggregation to incorporate neighborhood structural information.

Moreover, we theoretically prove that NASK is positive definite. This guarantees that it induces a valid Reproducing Kernel Hilbert Space (RKHS), enabling the use of kernel-based learning methods such as Support Vector Machines (SVMs). The positive definiteness further ensures that the kernel captures meaningful similarity while preserving the mathematical properties required for convergence and generalization in kernel learning frameworks. Experimental results demonstrate that our kernel empirically outperforms state-of-the-art baselines on various real-world attributed graph benchmarks.

In summary, our contributions are as follows. \begin{itemize} \item We propose a novel graph kernel NASK, that captures neighborhood structural semantics while effectively handling heterogeneous attributes in attributed graphs. 

\item We theoretically establish the positive definiteness of NASK, guaranteeing that it induces a valid RKHS.  

\item We apply NASK within a SVM framework and conduct extensive experiments on multiple real-world attributed graph benchmarks, demonstrating that NASK consistently outperforms state-of-the-art baselines in terms of accuracy and robustness, validating its practical effectiveness and broad applicability.

\end{itemize}

\section{Related Work}
Kernels for attributed graphs have received increased attention recently, and research efforts have focused on different types of attributes. Early methods~\cite{gartner2003graph, borgwardt2005protein, da2012tree, kashima2003marginalized} extended random walk and shortest path kernels~\cite{borgwardt2005shortest} to handle attribute graphs. However, they are computationally costly and limited structural expressiveness. Kriege \textit{et al}.~\cite{kriege2022weisfeiler} revisits random walk kernels and shows they can effectively handle attributed graphs when guided by the Weisfeiler-Lehman framework. Their reliance on discrete label matching makes them unsuitable for continuous attributes. Several subsequent approaches~\cite{kondor2016multiscale, kriege2012subgraph, orsini2015graph, feragen2013scalable} address this limitation by comparing vertex sets using both structural and attribute similarities. However, the kernel methods lack targeted consideration for numerical attributes.

To improve scalability, several frameworks~\cite{neumann2012efficient, morris2016faster} discretize numerical attributes and apply kernels designed for categorical attributes. For example, Hash Graph Kernel~\cite{morris2016faster} converts the attributes to categorical ones using well-defined hashing functions. Although the above approaches are scalable, it is coming at the cost of transformation of the original attributes. To address this, recent efforts have explored unified frameworks that jointly model structure and attributes while preserving attribute semantics. These include distributional approaches~\cite{togninalli2019wasserstein, xu2021isolation, perez2024gaussian}, regularized kernels, and Hilbert space embeddings~\cite{wijesinghe2021regularized, zhang2018retgk}. While these methods represent significant progress, they still face limitations in offering theoretical guarantees for positive definiteness and in comprehensively integrating structural and attribute information. Meanwhile, hierarchical and $k$-WL based methods~\cite{bai2022hierarchical, bai2024qbmk, tang2024deep, morris2017glocalized} enhance structural expressiveness, yet lack targeted modeling for attribute semantics. Furthermore, domain-specific kernels~\cite{huang2021density, griffiths2023gauche, yao2024molecular} highlight the need for general graph kernels that can effectively handle heterogeneous attributed graphs.

\section{Preliminaries}
For clarity and consistency, this section outlines the definitions of the mathematical notations and fundamental concepts used throughout this paper. 
\subsection{Key Concept about Graphs}
We use $\mathcal{Q} = \{q_0, q_1, \ldots, q_{|\mathcal{Q}|-1}\}$ to denote a set, where $|\mathcal{Q}|$ represents its cardinality (i.e., the number of elements in $\mathcal{Q}$), and $q_i$ denotes the $i$-th element of $\mathcal{Q}$. A vertex is denoted by $v$, and $\mathcal{N}(v)$ denotes the set of vertices adjacent to $v$. Let \( \mathcal{X} \) be the set of all graphs under consideration. $\mathbb{R}$ denotes the set of real numbers. The degree of $v$ is defined as the size of its neighborhood, i.e., $|\mathcal{N}(v)|$.
\begin{definition}[Attributed Graph]\label{Attributed Graph}
 An attributed graph is a tuple $\mathrm{G} = \langle\mathrm{A}, \mathrm{V}, \mathrm{E}, \lambda, \mathrm{F}\rangle$, where $A$ is a set of attribute names, $V$ is a set of vertices (also called nodes), $E$ is a set of edges, $F$ is a value set \hh{that contains a mix of numerical and categorical values}, $\lambda$:$A \times (V \cup E) \mapsto F $ is a function that maps vertex-attribute or edge-attribute pair to corresponding attribute values. 
\end{definition}
\begin{example}
\upshape An attributed graph is illustrated as $\mathrm{G}$ in Figure~\ref{fig:framework}, consisting of nine nodes (e.g., A, $\ldots$, J) in $\mathrm{V}$, nine edges in $\mathrm{E}$ represented by black lines, and the corresponding attribute information shown in the table highlighted in red. Each node in $\mathrm{G}$ is associated with one numerical attribute (\textit{Age}) and two categorical attributes (\textit{Gender} and \textit{Region}), all belonging to the set of attribute names $\mathrm{A}$. The table specifies the mapping function $\lambda$, assigning each node and edge their respective attribute values in $\mathrm{F}$; for example, node A is assigned the values \{\textit{23, Male, Bratislava}\}. 
\end{example}
\begin{definition}[Attributed Star Graph] \label{definition 2}
An attributed star $\mathrm{S}$ is formally defined as a tuple $\mathrm{S} = \langle\mathrm{A}, \mathrm{V}, \mathrm{E}, \lambda, \mathrm{F}, \mathrm{C}, \mathrm{L}\rangle$, where the first five elements are defined in Definition~\ref{Attributed Graph}, $\mathrm{C}$ is the core node, and $\mathrm{L}$ is the set of leaf nodes.
\end{definition}
\begin{example}
\upshape A star graph is shown as $\mathrm{S}_1$ in Figure~\ref{fig:framework} with a center node $A$ and its five leaf nodes $\{B,C,D,E,F\}$.
\end{example}

\subsection{Gower Similarity Coefficient}\label{sec: similarity}
Gower similarity coefficient~\cite{gower1971general} is a metric designed to measure pairwise similarity between data instances that may contain both numerical and categorical  attributes. It is particularly well-suited for handling heterogeneous attributes.
\begin{definition} [Gower similarity coefficient]\label{definition: gower}
Given two objects $\mathbf{x}$ and $\mathbf{x}'$ described by d-dimensional attributes, the Gower similarity coefficient $k(\mathbf{x}, \mathbf{x}')$ is defined as the average of the partial similarities computed on each individual attribute:
\begin{equation}
k(\mathbf{x}, \mathbf{x}') = \frac{1}{D} \sum_{d=1}^{D} s_d(x_d, x'_d),
\end{equation}
where $x_d$ and $x'_d$ are the values of the $d$-th attribute in $\mathbf{x}$ and $\mathbf{x}'$, respectively, and $s_d(\cdot,\cdot)$ is the partial similarity function specific to the type of attribute $d$.
\end{definition}
\begin{definition}[Normalized numerical similarity (Gower)]\label{numerical}
For numerical attributes, the partial similarity is typically computed as:
\begin{equation}\label{Eq.3}
s_d(x_d, x'_d) = 1 - \frac{|x_d - x'_d|}{\text{range}_d},
\end{equation}
where $\text{range}_d = \max_d - \min_d$, with $\max_d$ and $\min_d$ denoting the maximum and minimum values of the $d$-th attribute, respectively.
\end{definition}
\begin{definition}[Normalized categorical  similarity (Gower)]\label{categorized}
For categorical  attributes, the partial similarity is:
\begin{equation}\label{Eq.4}
s_d(x_d, x'_d) =
\begin{cases}
1, & \text{if } x_d = x'_d \\
0, & \text{otherwise.}
\end{cases}
\end{equation}
\end{definition}
Note that if $d \neq d'$, then $s_d(x_d, x_{d'}') = 0$, indicating that values corresponding to different dimensions of attributes are considered entirely dissimilar.

\begin{figure*}[t]
 \footnotesize
\centering
\includegraphics[width=1\textwidth]{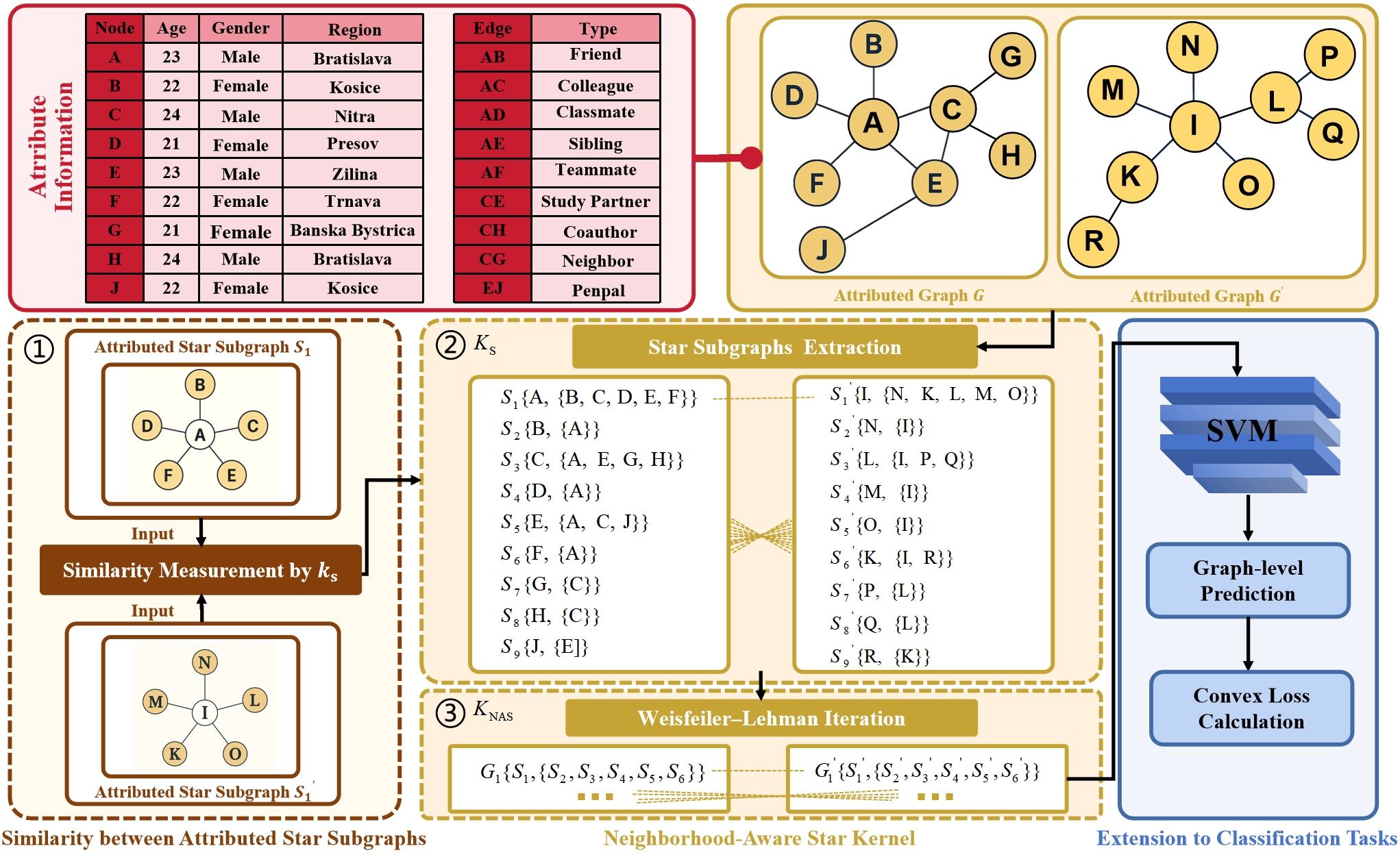} 
\caption{
 \footnotesize Overview of our proposed NASK. Given two attributed graphs $\mathrm{G}$ and $\mathrm{G}'$ in the yellow box. The attribute information of the graph $\mathrm{G}$ is presented in the red box. Step \protect\cirnum{1} illustrates the similarity measurement between attributed star subgraphs. Step \protect\cirnum{2} demonstrates the construction of the \hh{graph kernel} $K_\mathrm{S}$ based on these star subgraphs, denoted as $S_i$. Finally, Step \protect\cirnum{3} defines the NASK, formally denoted as $K_{\mathrm{NAS}}$, by integrating the methods from the previous two steps along with WL algorithm to expand each star subgraph with its one-hop neighborhood. For example, the expanded star graph from $S_1$ is represented as $\{S_1, \{S_2, S_3, S_4, S_5, S_6\}\}$. We present a simple example for illustration, though our method is general and not limited to this case.
}
\label{fig:framework}
\end{figure*}

\section{Methods}
This section introduces the core concepts behind our proposed graph kernel NASK. As illustrated in Figure~\ref{fig:framework}, NASK takes two attributed graphs as inputs. First, we define a kernel $k_\mathrm{s}$ to measure similarity between two star subgraphs. Next, all star subgraphs are extracted from the input graphs, and a \hh{graph kernel} $K_\mathrm{S}$ is constructed based on the pairwise similarity between these subgraphs measured by $k_\mathrm{s}$. Finally, the star subgraphs 
are expanded with multi-hop neighborhood information via WL algorithm, resulting in NASK, which is subsequently employed in a SVM for graph classification.

The details of our method are introduced in three parts which are in sequence:
$\textbf{(\romannumeral1)}$ Similarity measurement between attributed star subgraphs;
$\textbf{(\romannumeral2)}$ \hh{Star kernel;}
$\textbf{(\romannumeral3)}$ Neighborhood-Aware Star Kernel.

\subsection{Similarity Measurement between Attributed Star Subgraphs}\label{4.1}
The procedure illustrated in step \cirnumnormal{1} of Figure~\ref{fig:framework} is detailed in this section. Given two attributed star subgraphs $\mathrm{S} = \langle\mathrm{A}, \mathrm{V}, \mathrm{E}, \lambda, \mathrm{F}, \mathrm{C}, \mathrm{L}\rangle$ and $\mathrm{S}' = \langle\mathrm{A}', \mathrm{V}', \mathrm{E}', \lambda', \mathrm{F}', \mathrm{C}', \mathrm{L}'\rangle$, our proposed kernel $k_\mathrm{s}$  measures their similarity using a positive definite similarity function $\mathsf{P}$ derived from the Gower similarity coefficient. The proposed method consists of three components: first, we define a positive definite function $s_d'$, which is tailored to a single attribute $d$; second, we construct the attribute similarity function $\mathsf{P}$ for nodes and edges with multi-dimensional attributes based on $s_d'$; and finally, we introduce a kernel method $k_\mathrm{s}$ to measure the similarity between two star subgraphs.

$s_d(x_d,x_d')$ is a symmetric similarity function for the $d$-th attribute, as defined in Definitions~\ref{numerical} and \ref{categorized}. Based on this, we derive an exponentially transformed similarity function $s_d'$ for the $d$-th attribute:
\begin{equation}
    s_d'(x_d, x_d') = \exp\left( -\gamma (1 - s_d(x_d, x_d')) \right).
\end{equation}
where $\gamma > 0$ is a scaling parameter. We have that $s_d'$ defines a positive definite function under the condition that $1 - s_d(x_d, x_d')$ is a conditionally negative definite (CND) function. The transformed similarity $s_d’(x_d, x_d’)$ preserves the monotonicity of $s_d$ while maintaining its strong capability to handle heterogeneous attributes.

\begin{lemma}\label{lemma1}
Let $s_d(x_d, x_d')$ be a symmetric similarity function defined on the $d$-th attribute. Define $f_d(x_d, x_d') := 1 - s_d(x_d, x_d')$. Then \hh{under the constructions of $s_d$ for numerical, categorical and binary attributes in Gower similarity coefficient (see details in definition~\ref{definition: gower})}, the function $f_d$ is CND (see \textbf{Appendix} A.1 for the proof).
\end{lemma}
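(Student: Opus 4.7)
The plan is to verify the CND property case by case, matching the three constructions of $s_d$ that Gower's coefficient uses (numerical, categorical, binary), and to rely on the equivalent characterization of CND: a symmetric $f$ is CND iff for every finite sample $x_1,\dots,x_n$ and every real tuple $(c_1,\dots,c_n)$ with $\sum_i c_i = 0$ one has $\sum_{i,j} c_i c_j\, f(x_i,x_j) \le 0$.

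First, for the \emph{categorical and binary} cases, $s_d(x_d,x_d') = \mathbb{1}[x_d = x_d']$. The indicator of equality is a positive definite kernel, since it equals $\langle e_{x_d}, e_{x_d'}\rangle$ for the standard basis vectors $e_{x}$ in $\ell^2$ over the attribute alphabet. I would then invoke the elementary identity that ``constant minus PD is CND on the zero-sum cone'': for any $(c_i)$ with $\sum_i c_i = 0$,
\begin{equation*}
\sum_{i,j} c_i c_j\,\bigl(1 - s_d(x_{d,i}, x_{d,j})\bigr) = \Bigl(\sum_i c_i\Bigr)^{\!2} - \sum_{i,j} c_i c_j\, s_d(x_{d,i},x_{d,j}) \le 0,
\end{equation*}
where the inequality uses positive definiteness of $s_d$ and vanishing of the first term. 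This immediately gives CND of $f_d$ in both the categorical and binary settings.

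Second, for the \emph{numerical} case, $f_d(x_d,x_d') = |x_d - x_d'|/\text{range}_d$, so it suffices to show that $(x,y)\mapsto |x-y|$ is CND on $\mathbb{R}$, since multiplication by the positive constant $1/\text{range}_d$ preserves CND. The cleanest route here is to invoke Schoenberg's theorem: a symmetric function $f$ with $f(x,x)=0$ is CND iff $\exp(-t f)$ is positive definite for every $t>0$. The Laplace kernel $(x,y)\mapsto \exp(-t|x-y|)$ is a classical example of a positive definite kernel on $\mathbb{R}$ (its Fourier transform is the nonnegative Cauchy density), so Schoenberg's theorem delivers CND of $|x-y|$. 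Alternatively, one can give a self-contained proof via the integral representation $|x-y| = \int_{\mathbb{R}} \bigl(\mathbb{1}[x \le t] - \mathbb{1}[y \le t]\bigr)^2\, dt$ up to a constant, which exhibits $\sqrt{|x-y|}$ as a Hilbert-space distance and therefore makes $|x-y|$ CND by definition.

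The main obstacle is the numerical case: unlike the categorical/binary cases, which reduce to a two-line application of the ``constant minus PD'' identity, showing CND of the absolute-value metric requires citing a nontrivial classical fact (Schoenberg's theorem, or equivalently the Hilbertian embedding of $\mathbb{R}$ under $\sqrt{|x-y|}$). I would therefore state this result explicitly, reference it, and note that scaling by $1/\text{range}_d > 0$ preserves CND, after which the three cases combine to give the lemma.
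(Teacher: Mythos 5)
Your proof is correct, but it takes a genuinely different route from the paper's. The paper gives a single unified argument: after the constant term $\sum_{i,j}\alpha_i\alpha_j$ vanishes under the zero-sum constraint, it reduces the remaining sum $\sum_{i,j}\alpha_i\alpha_j s_d(x_i,x_j)$ to the diagonal $\sum_i \alpha_i^2 s_d(x_i,x_i)\ge 0$ by asserting that $s_d(x_i,x_j)=0$ whenever $i\ne j$. That assertion only holds for comparisons across \emph{different attribute dimensions} (the remark following Definition~\ref{categorized}), not for two distinct objects' values of the \emph{same} attribute: the numerical Gower similarity $s_d(x_i,x_j)=1-|x_i-x_j|/\mathrm{range}_d$ is generically nonzero off the diagonal, and the categorical one equals $1$ whenever $x_i=x_j$ with $i\ne j$. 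Your case analysis supplies what that step is missing. Your identity $\sum_{i,j}c_ic_j\bigl(1-s_d\bigr)=\bigl(\sum_i c_i\bigr)^2-\sum_{i,j}c_ic_j\,s_d$ is the correct general principle the paper's computation is reaching for (``constant minus PD is CND on the zero-sum cone''), and it settles the categorical and binary cases once you observe the equality indicator is PD via the basis-vector embedding. You also correctly recognize that the numerical case does not reduce to this and requires the classical CND-ness of $|x-y|$, which you justify twice over (Schoenberg plus the Laplace kernel, or the $L^2$ integral representation of $|x-y|$, which is in fact exact rather than ``up to a constant''). What the paper's version buys is brevity and uniformity across attribute types; what yours buys is an argument that actually covers the numerical construction, at the modest cost of importing one standard fact about the absolute-value metric.
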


\begin{lemma} \label{lemma2}
\hh{$s_d'$ is positive definite} (see \textbf{Appendix} A.2 for the proof).
\end{lemma}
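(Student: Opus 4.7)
The plan is to obtain Lemma~\ref{lemma2} as a direct consequence of Lemma~\ref{lemma1} via Schoenberg's classical theorem relating conditionally negative definite functions to positive definite exponential kernels. Concretely, Schoenberg's theorem states that for a symmetric function $\psi$ on $\mathcal{X}\times\mathcal{X}$ with $\psi(x,x)=0$, the function $\psi$ is CND if and only if $\exp(-\gamma\,\psi)$ is positive definite for every $\gamma>0$. Since Lemma~\ref{lemma1} already gives that $f_d(x_d,x_d'):=1-s_d(x_d,x_d')$ is CND, the desired positive definiteness of $s_d'=\exp(-\gamma f_d)$ should follow almost immediately.

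The first step I would take is to verify the mild regularity hypotheses needed to apply Schoenberg. I would check that (i) $f_d$ is symmetric in its two arguments, which is inherited from the symmetry of $s_d$ asserted in Section~\ref{sec: similarity}, and (ii) $f_d(x_d,x_d)=0$, which holds because $s_d(x_d,x_d)=1$ for both the normalized numerical similarity of Definition~\ref{numerical} and the categorical similarity of Definition~\ref{categorized}. With these in place, Schoenberg's theorem applies, and I would cite it explicitly (e.g., Berg, Christensen and Ressel, \emph{Harmonic Analysis on Semigroups}) to conclude that $\exp(-\gamma f_d)$ is a positive definite function on the domain of the $d$-th attribute.

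The second step is to spell out what positive definiteness means here to match the paper's needs: for any finite collection of attribute values $x_d^{(1)},\ldots,x_d^{(n)}$ and any coefficients $c_1,\ldots,c_n\in\mathbb{R}$, the Gram matrix with entries $s_d'(x_d^{(i)},x_d^{(j)})=\exp(-\gamma f_d(x_d^{(i)},x_d^{(j)}))$ is positive semi-definite, i.e., $\sum_{i,j} c_i c_j\, s_d'(x_d^{(i)},x_d^{(j)})\ge 0$. This formulation is what the subsequent construction of $\mathsf{P}$ and the star kernel $k_{\mathrm{s}}$ will rely on.

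I do not expect a serious obstacle here; the only subtlety is ensuring we cite the correct form of Schoenberg's theorem (the one for CND kernels with zero diagonal, not merely negative definite kernels), and ensuring Lemma~\ref{lemma1} really gives the CND property for \emph{all} the attribute types considered (numerical, categorical, and binary), so that the conclusion of Lemma~\ref{lemma2} holds uniformly across attribute types. If a different reference states Schoenberg's theorem without the $\psi(x,x)=0$ assumption, the argument is even cleaner, as one may simply invoke it with $\psi=f_d$ and $\gamma>0$ as given.
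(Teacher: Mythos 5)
Your proposal follows essentially the same route as the paper's Appendix A.2 proof: take the conditional negative definiteness of $f_d = 1 - s_d$ from Lemma~1 and invoke the Schoenberg-type result of Berg, Christensen and Ressel to conclude that $\exp(-\gamma f_d)$ is positive definite for all $\gamma > 0$. Your additional verification of symmetry and of $f_d(x_d, x_d) = 0$ is a harmless (and slightly more careful) elaboration, but does not change the argument.
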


Given a pair of nodes $(v, v')$ and $v \in \mathrm{V}, v' \in \mathrm{V}'$, we design 
an attribute similarity function $\mathsf{P}(v,v')$ for each pair based on $s_d'$. Specifically, $\mathsf{P}(v, v')$ is given by the following equation:
\begin{equation}\label{EQ:P}
    \mathsf{P}(v, v') = \frac{1}{\mathrm{D}} \sum_{d=1}^{\mathrm{D}} s_d'(\lambda(\mathrm{A}, v)_d, \lambda'(\mathrm{A}', v')_d),
\end{equation}
\hh{where $\mathrm{D}$ denotes the dimensionality} of attributes, and $\lambda(\mathrm{A}, v)_d$ denotes the $d$-th element of the attribute value of $v$. Similarly, $\lambda'(\mathrm{A}', v’)_d$ denotes the $d$-th element of the attribute value of $v’$.  The similarity between a pair of edges $(e, e')$ could also be computed by replacing $v$ and $v'$ with $e$ and $e'$, respectively, in Eq.~\ref{EQ:P}.
\begin{theorem}\label{theorem 1}
\hh{$\;\mathsf{P}$ is positive definite} (see \textbf{Appendix} A.3 for the proof).
\end{theorem}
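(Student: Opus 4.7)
The plan is to show positive definiteness directly from the definition, by reducing to Lemma 2 and exploiting the standard closure properties of the cone of positive definite kernels. Concretely, let $v_1,\dots,v_n$ be any finite collection of nodes (or edges) drawn from the domain and let $c_1,\dots,c_n\in\mathbb{R}$. I need to verify
\[
\sum_{i,j=1}^{n} c_i c_j\,\mathsf{P}(v_i,v_j)\;\ge\;0.
\]
Substituting the defining formula (Eq.~\ref{EQ:P}) and swapping the finite sums gives
\[
\sum_{i,j} c_i c_j\,\mathsf{P}(v_i,v_j)
=\frac{1}{\mathrm{D}}\sum_{d=1}^{\mathrm{D}}\Bigl(\sum_{i,j} c_i c_j\, s_d'(\lambda(\mathrm{A},v_i)_d,\lambda(\mathrm{A},v_j)_d)\Bigr),
\]
so it suffices to show that each inner double sum is non-negative.

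The second step is to invoke Lemma 2 on each attribute slice. Lemma 2 asserts that $s_d'$ is positive definite as a function of its two input attribute values. Composing a positive definite kernel with a fixed map into its input space preserves positive definiteness: writing $x_i^{(d)}:=\lambda(\mathrm{A},v_i)_d$, the matrix $\bigl[s_d'(x_i^{(d)},x_j^{(d)})\bigr]_{i,j}$ is a principal submatrix (up to repeated rows/columns, which is harmless) of the Gram matrix of $s_d'$ evaluated on the distinct attribute values, hence positive semidefinite. Therefore $\sum_{i,j} c_i c_j\, s_d'(x_i^{(d)},x_j^{(d)}) \ge 0$ for each $d$.

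Finally, positive definiteness is closed under non-negative linear combinations, so summing the non-negative terms over $d\in\{1,\dots,\mathrm{D}\}$ and rescaling by $\frac{1}{\mathrm{D}}>0$ keeps the quantity non-negative, which gives the desired inequality. The same argument applies verbatim when $v,v'$ are replaced by edges $e,e'$, since only the pullback through $\lambda$ changes. I do not anticipate a serious obstacle here: the only subtlety is the pullback step, where one must be careful that the same attribute index $d$ is used on both sides of $s_d'$ (the remark after Definition~\ref{categorized} explicitly sets cross-dimension similarities to zero, so comparing the $d$-th coordinate to the $d$-th coordinate is exactly what Lemma 2 covers). Everything else is a direct consequence of the cone structure of positive definite kernels.
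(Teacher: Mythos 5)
Your proposal is correct and follows essentially the same route as the paper's Appendix A.3 proof: reduce to Lemma~\ref{lemma2} for each attribute slice and conclude by closure of positive definite kernels under non-negative linear combinations and positive scaling by $1/\mathrm{D}$. The only difference is that you unfold the quadratic-form verification and make the pullback through $\lambda$ explicit, which the paper leaves implicit.
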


The kernel $k_\mathrm{s}$ between two star subgraphs $\mathrm{S}$ and $\mathrm{S}'$ is defined by the following equation:
\begin{equation} \label{k_s}
\small
    k_\mathrm{s}(\mathrm{S}, \mathrm{S'}) = \sum_{n \in R^{-1}(\mathrm{S})} \sum_{n' \in R^{-1}(\mathrm{S}')}  \mathsf{P}(\mathrm{C},\mathrm{C'})\mathsf{P}(n, n'), 
\end{equation}
where $R^{-1}(\mathrm{S})$ denotes the set of valid decompositions of $\mathrm{S}$ into nodes and edges, and similarly for $R^{-1}(\mathrm{S}’)$. The function $\mathsf{P}(n, n’)$ measures the attribute similarity between two elements $n \in \mathrm{S}$ and $n’ \in \mathrm{S}’$, where $n$ and $n’$ are either nodes or edges. \hh{To softly emphasize the importance of the similarity between the two center nodes $\mathrm{C}$ and $\mathrm{C'}$, the kernel computes the sum of the products $\mathsf{P}(\mathrm{C}, \mathrm{C’}) \cdot \mathsf{P}(n, n’)$, ensuring that structural similarities are weighted by central node similarity}.

\begin{theorem}\label{theorem 2}
Let $\mathrm{S}$ and $\mathrm{S}'$ be two attributed star subgraphs. The kernel $k_\mathrm{s}(\mathrm{S}, \mathrm{S}')$, as defined in Eq.~\ref{k_s}, is a valid positive definite kernel (see \textbf{Appendix} A.4 for the proof).
\end{theorem}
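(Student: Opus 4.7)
The plan is to show that $k_\mathrm{s}$ is positive definite by exhibiting it as a product of two known positive definite kernels on stars, then invoking the Schur product theorem. First I would observe that the factor $\mathsf{P}(\mathrm{C},\mathrm{C}')$ in Eq.~\ref{k_s} depends only on the pair $(\mathrm{S}, \mathrm{S}')$, since each star has a uniquely determined center node. Hence I can factor it out of the double sum and write
\begin{equation}
k_\mathrm{s}(\mathrm{S}, \mathrm{S}') \;=\; K_{\mathrm{C}}(\mathrm{S}, \mathrm{S}') \cdot K_{\mathrm{R}}(\mathrm{S}, \mathrm{S}'),
\end{equation}
where $K_{\mathrm{C}}(\mathrm{S},\mathrm{S}') := \mathsf{P}(\mathrm{C}(\mathrm{S}), \mathrm{C}(\mathrm{S}'))$ and $K_{\mathrm{R}}(\mathrm{S},\mathrm{S}') := \sum_{n \in R^{-1}(\mathrm{S})} \sum_{n' \in R^{-1}(\mathrm{S}')} \mathsf{P}(n,n')$. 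Establishing positive definiteness of each factor separately then suffices.

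For $K_{\mathrm{C}}$, I would argue that it is the pullback of $\mathsf{P}$ along the map $\mathrm{S} \mapsto \mathrm{C}(\mathrm{S})$, and such pullbacks preserve positive definiteness: for any stars $\mathrm{S}_1, \ldots, \mathrm{S}_m$ and scalars $c_1, \ldots, c_m$, the quadratic form $\sum_{i,j} c_i c_j K_{\mathrm{C}}(\mathrm{S}_i,\mathrm{S}_j) = \sum_{i,j} c_i c_j \mathsf{P}(\mathrm{C}(\mathrm{S}_i), \mathrm{C}(\mathrm{S}_j))$ is nonnegative by Theorem~\ref{theorem 1}. For $K_{\mathrm{R}}$, I would recognize it as an instance of Haussler's R-convolution construction (equivalently, the sum-of-feature-maps view): letting $\phi$ denote a feature map associated with $\mathsf{P}$,
\begin{equation}
K_{\mathrm{R}}(\mathrm{S}, \mathrm{S}') = \Big\langle \sum_{n \in R^{-1}(\mathrm{S})} \phi(n),\; \sum_{n' \in R^{-1}(\mathrm{S}')} \phi(n') \Big\rangle,
\end{equation}
which is manifestly an inner product of vectors in the RKHS of $\mathsf{P}$ and thus positive definite. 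Here $R^{-1}(\mathrm{S})$ is a finite set of nodes and edges, so all sums are well-defined.

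Finally, since $K_{\mathrm{C}}$ and $K_{\mathrm{R}}$ are both positive definite kernels on the space of attributed star subgraphs, their pointwise product $k_\mathrm{s} = K_{\mathrm{C}} \cdot K_{\mathrm{R}}$ is positive definite by the Schur product theorem, which completes the proof. I do not anticipate a real obstacle: the argument is almost mechanical once $\mathsf{P}$ is known to be positive definite. The only subtlety worth flagging is the proper treatment of the decomposition set $R^{-1}(\mathrm{S})$, which mixes nodes and edges; this is handled uniformly because $\mathsf{P}$ is defined on both (with cross-type pairs having similarity zero, consistent with the convention after Definition~\ref{categorized}), so the inner-product interpretation of $K_{\mathrm{R}}$ remains valid without modification.
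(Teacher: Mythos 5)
Your proof is correct, and it uses the same basic ingredients as the paper's Appendix~A.4 argument (positive definiteness of $\mathsf{P}$ from Theorem~\ref{theorem 1}, closure of positive definite kernels under products and under sums over decompositions), but you arrange them differently and, in one respect, more rigorously. The paper works summand by summand: it declares each term $\mathsf{P}(\mathrm{C},\mathrm{C}')\mathsf{P}(n,n')$ positive definite as a product of PD kernels and then invokes ``a finite sum of PD kernels is PD.'' Taken literally that last step is loose, because the index set $R^{-1}(\mathrm{S})\times R^{-1}(\mathrm{S}')$ varies with the inputs, so the individual summands are not well-defined kernels on the space of star pairs; the correct justification is precisely the R-convolution / sum-of-feature-maps argument. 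You instead factor the constant $\mathsf{P}(\mathrm{C},\mathrm{C}')$ out of the double sum, write $k_\mathrm{s}=K_{\mathrm{C}}\cdot K_{\mathrm{R}}$ with $K_{\mathrm{C}}$ a pullback of $\mathsf{P}$ along the center map and $K_{\mathrm{R}}$ an explicit inner product of summed feature vectors, and finish with the Schur product theorem; this supplies exactly the feature-map justification the paper elides. Your remark about node--edge cross pairs in $R^{-1}(\mathrm{S})$ is also a genuine point the paper's proof does not address: treating $\mathsf{P}$ as block-diagonal on the disjoint union of nodes and edges is what makes the inner-product reading of $K_{\mathrm{R}}$ legitimate. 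In short, both routes are two sentences apart in spirit, but yours closes a small gap in the paper's phrasing at no extra cost.
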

\subsection{Star Kernel}\label{Local Kernel}
The procedure illustrated in step \tikz[baseline=(char.base)]{
\node[shape=circle,draw,inner sep=1pt] (char) {2};
} of Figure~\ref{fig:framework} is detailed in this section. Given two attributed graphs $\mathrm{G} = \langle\mathrm{A}, \mathrm{V}, \mathrm{E}, \lambda, \mathrm{F}\rangle$ and $\mathrm{G}' = \langle\mathrm{A}', \mathrm{V}', \mathrm{E}', \lambda', \mathrm{F}'\rangle$, we propose a graph kernel $K_\mathrm{S}$ that measures their similarity based on the similarities between their attributed star subgraphs (see Section~\ref{4.1} for details). The proposed method consists of two main components: first, we decompose each input graph into a set of star subgraphs; second, we compute $K_\mathrm{S}$ based on the pairwise similarities between the corresponding star subgraphs. We give a pseudocode of $K_\mathrm{S}$ in Algorithm 1 (see details in \textbf{Appendix} C).

According to Definition~\ref{definition 2}, a star subgraph consists of a center node and its directly connected neighbors. Based on this notion, we define the star subgraph extraction function:
\begin{equation}
\mathcal{F}: v \times \mathrm{G} \rightarrow \mathrm{S}, \;\;\;\;  \mathrm{S} = \langle \mathrm{A}^s, \mathrm{V}^s, \mathrm{E}^s, \lambda^s, \mathrm{F}^s, \mathrm{C}^s, \mathrm{L}^s\rangle,
\end{equation}
which maps a given node $v \in \mathrm{V}$ and an attributed graph $\mathrm{G}$ to the corresponding star subgraph centered at $v$.The extraction operation $\mathcal{F}$ is formally defined as:
\begin{align}\label{it0}
\mathcal{F}(v, \mathrm{G}) = &\big\langle\;
    \mathrm{G}.\mathrm{A},\;
    \{v\} \cup \mathcal{N}(v),\;
    \{(v, u) \in \mathrm{E} \mid u \in \mathcal{N}(v)\},\; \notag \\
    & \mathrm{G}.\lambda, \;
    \mathrm{G}.\mathrm{F},\;
    v,\;
    \mathcal{N}(v)
\;\big\rangle,
\end{align}
where $\mathcal{N}(v)$ denotes the set of neighbors of node $v$, and the edge set $\mathrm{E}^s = \{(v, u) \in \mathrm{E} \mid u \in \mathcal{N}(v)\}$ includes all edges connecting the central node to its neighbors. Each such star subgraph $\mathcal{F}(v, \mathrm{G})$ encapsulates the structure and attribute information around the central node $v$. We denote the set of all star subgraphs in $\mathrm{G}$ as:
\begin{equation}
\hat{\mathcal{S}}(\mathrm{G}) = \{ \mathcal{F}(v, \mathrm{G}) \mid v \in \mathrm{V} \},
\end{equation}
where $V$ is the vertex set of $\mathrm{G}$. The total number of star subgraphs is therefore $|\hat{\mathcal{S}}(\mathrm{G})| = |\mathrm{V}|$.

Given two graphs $\mathrm{G}$ and $\mathrm{G}'$, the graph kernel $K_\mathrm{S}$ is defined as:
\begin{equation}
K_\mathrm{S}(\mathrm{G}, \mathrm{G}') = \sum_{\mathrm{S} \in \hat{\mathcal{S}}(\mathrm{G})} \sum_{\mathrm{S}' \in \hat{\mathcal{S}}(\mathrm{G}')} k_\mathrm{s}(\mathrm{S}, \mathrm{S}'),
\end{equation}
where $k_\mathrm{s}(\mathrm{S}, \mathrm{S}')$ denotes the similarity between two star subgraphs $\mathrm{S}$ and $\mathrm{S}'$ based on their attribute and structural similarities.
\begin{theorem}\label{Theorem2}
Let $\mathrm{G}$ and $\mathrm{G}'$ be two attributed graphs, and let $K_\mathrm{S}(\mathrm{G}, \mathrm{G}')$ be the graph kernel defined as the sum of kernel $k_\mathrm{s}$ over all pairs of star subgraphs. Then $K_\mathrm{S}$ is a valid positive definite kernel (see \textbf{Appendix} A.5 for the proof).
\end{theorem}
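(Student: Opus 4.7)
The plan is to recognize $K_\mathrm{S}$ as an instance of Haussler's R-convolution kernel, taking the decomposition relation $R$ to be the map sending each attributed graph $\mathrm{G}$ to its (finite) set of star subgraphs $\hat{\mathcal{S}}(\mathrm{G})$ centered at every vertex. Once this identification is made, the positive definiteness of $K_\mathrm{S}$ follows immediately from the convolution-kernel theorem, provided the base kernel $k_\mathrm{s}$ is itself positive definite, which is exactly the content of Theorem~\ref{theorem 2} established in the preceding subsection.

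To make the argument self-contained, I would prefer to give it concretely through a feature map rather than quoting Haussler as a black box. First I would invoke Theorem~\ref{theorem 2} to obtain a Hilbert space $\mathcal{H}$ and a feature map $\phi$ from the space of attributed star subgraphs into $\mathcal{H}$ such that $k_\mathrm{s}(\mathrm{S}, \mathrm{S}') = \langle \phi(\mathrm{S}), \phi(\mathrm{S}') \rangle_\mathcal{H}$. Next I would define a graph-level feature map
\begin{equation}
    \Phi(\mathrm{G}) \;=\; \sum_{\mathrm{S} \in \hat{\mathcal{S}}(\mathrm{G})} \phi(\mathrm{S}) \;\in\; \mathcal{H},
\end{equation}
which is well defined because $\hat{\mathcal{S}}(\mathrm{G})$ is finite. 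By bilinearity of the inner product,
\begin{equation}
    \langle \Phi(\mathrm{G}), \Phi(\mathrm{G}') \rangle_\mathcal{H}
    \;=\; \sum_{\mathrm{S} \in \hat{\mathcal{S}}(\mathrm{G})} \sum_{\mathrm{S}' \in \hat{\mathcal{S}}(\mathrm{G}')} \langle \phi(\mathrm{S}), \phi(\mathrm{S}') \rangle_\mathcal{H}
    \;=\; K_\mathrm{S}(\mathrm{G}, \mathrm{G}'),
\end{equation}
which exhibits $K_\mathrm{S}$ as an inner product in $\mathcal{H}$ and thus as a positive definite kernel on $\mathcal{X}$.

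For completeness I would also verify the defining Gram-matrix inequality directly: for any finite collection $\mathrm{G}_1,\dots,\mathrm{G}_n \in \mathcal{X}$ and any coefficients $c_1,\dots,c_n \in \mathbb{R}$, the double-sum expansion together with the bilinearity above gives $\sum_{i,j} c_i c_j K_\mathrm{S}(\mathrm{G}_i, \mathrm{G}_j) = \bigl\lVert \sum_i c_i \Phi(\mathrm{G}_i) \bigr\rVert_\mathcal{H}^2 \geq 0$, confirming positive semidefiniteness. Symmetry of $K_\mathrm{S}$ is inherited from the symmetry of $k_\mathrm{s}$.

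Honestly, this statement is the easiest in the sequence: all the heavy lifting (the exponential transformation, the CND property of $1-s_d$, the multi-attribute aggregation $\mathsf{P}$, and the center-weighted double sum defining $k_\mathrm{s}$) has already been carried out in Lemmas~\ref{lemma1}--\ref{lemma2} and Theorems~\ref{theorem 1}--\ref{theorem 2}. The only conceptual point that requires care is ensuring that the star decomposition is a bona fide R-convolution decomposition, i.e.\ that $\hat{\mathcal{S}}(\mathrm{G})$ is a finite multiset determined by $\mathrm{G}$ alone; this follows directly from the definition of $\mathcal{F}$, so no genuine obstacle remains.
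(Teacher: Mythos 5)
Your proof is correct and takes essentially the same route as the paper: both reduce the claim to the positive definiteness of $k_\mathrm{s}$ (Theorem~\ref{theorem 2}) together with closure of positive definite kernels under summation over the finite star decomposition, i.e., the standard R-convolution argument. If anything, your explicit feature map $\Phi(\mathrm{G}) = \sum_{\mathrm{S} \in \hat{\mathcal{S}}(\mathrm{G})} \phi(\mathrm{S})$ is the more rigorous rendering, since the paper's one-line appeal to ``a finite sum of positive definite kernels is positive definite'' does not literally apply when the summands are indexed by substructures of the arguments themselves; the inner-product identity $K_\mathrm{S}(\mathrm{G},\mathrm{G}') = \langle \Phi(\mathrm{G}), \Phi(\mathrm{G}')\rangle_{\mathcal{H}}$ is the correct formalization.
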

\subsection{Neighborhood-Aware Star Kernel}
\label{sec:SWL-kernel}
The procedure illustrated in step \tikz[baseline=(char.base)]{
\node[shape=circle,draw,inner sep=1pt] (char) {3};
} of Figure~\ref{fig:framework} is detailed in this section. Given two attributed graphs $\mathrm{G} = \langle\mathrm{A}, \mathrm{V}, \mathrm{E}, \lambda, \mathrm{F}\rangle$ and $\mathrm{G}' = \langle\mathrm{A}', \mathrm{V}', \mathrm{E}', \lambda', \mathrm{F}'\rangle$, the kernel is designed by two steps: first, we define a expansion function $\mathcal{L}$ for star subgraphs with iterative propagation, following the Weisfeiler--Lehman (WL) algorithm; second, we define $K_{\mathrm{NAS}}$ based on the pairwise similarities between the corresponding subgraphs derived from $\mathcal{L}$. This kernel extends the expressiveness of the star kernel $K_\mathrm{S}$ by incorporating multi-hop neighborhood information. We give a pseudocode of NASK in Algorithm 2 (see details in \textbf{Appendix} C).

We define the iterative expansion of star subgraphs based on the WL algorithm as follows:
\begin{equation}
\mathcal{L}: \mathrm{S}^{h-1} \times \mathrm{G} \rightarrow \mathrm{S}^{h},\;\; \mathrm{S}^h = \langle\mathrm{A}^s, \mathrm{V}^s, \mathrm{E}^s, \lambda^s, \mathrm{F}^s, \mathrm{C}^s, \mathrm{L}^s\rangle,
\end{equation}
where $h$ is the depth of iteration, and $\mathcal{L}$, derived from the $\mathrm{HASH}$ function of WL (see details in Appendix A.6), denotes the operation that extends the star graph $\mathrm{S}^{h-1}$ by including its $1$-hop neighborhood. The initial star subgraph is given by $\mathrm{S}^1 = \mathcal{F}(v, \mathrm{G})$, as defined in Eq.~\ref{it0}. Given a star subgraph $\mathrm{S}^{h-1}$, the function $\mathcal{L}$ operates as follows:
{\footnotesize
\begin{align}
\mathcal{L}(\mathrm{S}^{h-1},\mathrm{G}) = &\big\langle\; 
    \mathrm{G}.\mathrm{A},\;
    \mathrm{S}^{h-1} \cup \mathcal{N}(\mathrm{S}^{h-1}),\; \{(\mathrm{S},\mathrm{S}^{h-1}) \mid \mathrm{S} \in \mathcal{N}(\mathrm{S}^{h-1})\},\notag \\ 
    &  \; 
    \mathrm{G}.\lambda,\;\;
    \mathrm{G}.\mathrm{F},\;\;
    \mathrm{S}^{h-1},\;\;
    \mathcal{N}(\mathrm{S}^{h-1})\;
\big\rangle,
\end{align}}
where $\mathcal{N}(\cdot)$ denotes the set of neighboring star subgraphs of the given star subgraph, and $.$ is used to access elements of the tuple $\mathrm{S}^{h-1}$. The components in the big bracket correspond to the elements in the tuple of $\mathrm{S}^{h}$. For example, the node set $\mathrm{S}^h.\mathrm{V}$ is updated to $\mathrm{S}^{h-1} \cup \mathcal{N}(\mathrm{S}^{h-1})$. In particular, WL iteration can be terminated at a specific depth $H$ for a finite graph where $H$ is bounded by $min(|\mathrm{G}.V|, |\mathrm{G'}.V'|)$~\cite{pellizzoni2025graph}. By expanding the star subgraph $\mathrm{S}^{h-1}$ to incorporate an additional hop of neighboring nodes, the design captures richer positional and structural information.

The set of all $h$-hop expanded star subgraphs in $\mathrm{G}$ is defined inductively via 1-hop expansions as:
\begin{align}
\hat{\mathcal{S}}^{(1)}(\mathrm{G}) &= \hat{\mathcal{S}}(\mathrm{G}), \quad \\
\hat{\mathcal{S}}^{(h)}(\mathrm{G}) &= \left\{ \mathcal{L}(\mathrm{S}^{(h-1)},\mathrm{G}) \mid \mathrm{S}^{(h-1)} \in \hat{\mathcal{S}}^{(h-1)}(\mathrm{G}),\;\; h \ge 2 \right\}.
\end{align}
The Neighborhood-Aware Star Kernel is then defined as:
\begin{equation}
K_{\mathrm{NAS}}^{(H)}(\mathrm{G}, \mathrm{G}') = \sum_{h = 1}^{H}\sum_{\mathrm{S} \in \hat{\mathcal{S}}^{(h)}(\mathrm{G})} \sum_{\mathrm{S}' \in \hat{\mathcal{S}}^{(h)}(\mathrm{G}')} k_\mathrm{s}(\mathrm{S}, \mathrm{S}'),
\end{equation}
\hh{where $k_\mathrm{s}(\mathrm{S}, \mathrm{S}')$ is introduced in Section~\ref{4.1}} and $H$ is the deepest WL iteration. Under the condition that $H=1$, the NAS kernel reduces to the kernel $K_\mathrm{S}$ (w.r.t. $K_{\mathrm{NAS}}^{(1)}(\mathrm{G}, \mathrm{G}') = K_\mathrm{S}(\mathrm{G}, \mathrm{G}')$). As $H$ increases, the kernel captures higher-order structural and attribute interactions, making $K_{\mathrm{NAS}}^{(H)}$ increasingly expressive. \hh{Furthermore, this expressiveness comes at the acceptable computational complexity which is $\mathcal{O}(Hn^2(n+m)^2d)$ in the worse case where n, m denote numbers of nodes and edges of input graphs, and d denotes the dimensionality of attributes (the detailed analysis is provided in \textbf{Appendix} A.8)}.
\begin{theorem}\label{theorem 4}
Let $ K_{\mathrm{NAS}}^{(H)}(\mathrm{G}, \mathrm{G}')$ be the Neighborhood-Aware Star kernel of iteration depth $H$, defined over the space of attributed graphs. Then $K_{\mathrm{NAS}}^{(H)}$ is a valid positive definite kernel (see \textbf{Appendix} A.8 for the proof).
\end{theorem}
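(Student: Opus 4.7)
The plan is to reduce Theorem~\ref{theorem 4} to the already-established positive definiteness of $k_\mathrm{s}$ (Theorem~\ref{theorem 2}) combined with two standard closure properties of positive definite kernels: closure under R-convolution-style double summation over finite decompositions, and closure under finite non-negative linear combinations. First I would verify the type-correctness of the construction: for every fixed iteration depth $h \in \{1,\dots,H\}$, the elements of $\hat{\mathcal{S}}^{(h)}(\mathrm{G})$ are by construction attributed star subgraphs in the sense of Definition~\ref{definition 2}, because the expansion $\mathcal{L}$ returns a tuple with a distinguished core $\mathrm{C}^s = \mathrm{S}^{h-1}$, a leaf set $\mathrm{L}^s = \mathcal{N}(\mathrm{S}^{h-1})$, and inherited attribute map $\mathrm{G}.\lambda$. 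This is the only place where some care is needed, and once it is confirmed, Theorem~\ref{theorem 2} applies verbatim and guarantees that $k_\mathrm{s}$ is PD on the enlarged space of $h$-expanded stars.

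Next I would fix $h$ and define the per-depth intermediate kernel
\begin{equation*}
K_\mathrm{S}^{(h)}(\mathrm{G}, \mathrm{G}') := \sum_{\mathrm{S} \in \hat{\mathcal{S}}^{(h)}(\mathrm{G})} \sum_{\mathrm{S}' \in \hat{\mathcal{S}}^{(h)}(\mathrm{G}')} k_\mathrm{s}(\mathrm{S}, \mathrm{S}').
\end{equation*}
This has exactly the R-convolution form of Haussler: the relation $R^{-1}(\mathrm{G}) := \hat{\mathcal{S}}^{(h)}(\mathrm{G})$ decomposes each attributed graph into a finite multiset of parts, and the base kernel on parts is PD. Thus the same argument used to establish Theorem~\ref{Theorem2} for the case $h=1$ shows that $K_\mathrm{S}^{(h)}$ is PD on the space of attributed graphs for every $h \ge 1$; indeed one can observe directly that the Gram matrix of $K_\mathrm{S}^{(h)}$ equals $B^{(h)} K^{(h)}_s (B^{(h)})^{\top}$ for the incidence matrix $B^{(h)}$ that records which expanded stars belong to which graph, and is therefore positive semidefinite.

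Finally, since
\begin{equation*}
K_{\mathrm{NAS}}^{(H)}(\mathrm{G}, \mathrm{G}') = \sum_{h=1}^{H} K_\mathrm{S}^{(h)}(\mathrm{G}, \mathrm{G}'),
\end{equation*}
and the set of PD kernels is closed under finite sums with non-negative coefficients, positive definiteness of $K_{\mathrm{NAS}}^{(H)}$ follows. The main obstacle I foresee is not analytical but structural: one must argue that at every iteration $\mathcal{L}$ outputs an object to which Theorem~\ref{theorem 2} is genuinely applicable, including the subtlety that expanded stars at different depths live in heterogeneous spaces yet the kernel $k_\mathrm{s}$ is still well-defined on each. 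After that technicality is dispatched, the rest is a direct, almost mechanical application of the closure properties of the PD cone.
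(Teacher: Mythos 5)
Your proposal is correct and follows essentially the same route as the paper's own proof: both reduce the claim to the positive definiteness of $k_\mathrm{s}$ (Theorem~\ref{theorem 2}) and then invoke the R-convolution structure of the double sum over $\hat{\mathcal{S}}^{(h)}$ together with closure of positive definite kernels under finite summation over $h$. Your version is somewhat more explicit than the paper's (which is little more than a bulleted list) --- in particular the incidence-matrix factorization of the per-depth Gram matrix and the flagged type-correctness check that $\mathcal{L}$ outputs objects to which Theorem~\ref{theorem 2} applies are details the paper glosses over --- but the underlying argument is identical.
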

\begin{proof}
Since:
\begin{itemize}
  \item $k_\mathrm{s}$ is positive definite.
  \item The set of $h$-hop star subgraphs is finite for finite graphs.
  \item The triple sum is a finite sum of PD kernels.
\end{itemize}
\end{proof}
\paragraph{Extension to Classification Tasks}
The positive definiteness of $K_{\mathrm{NAS}}$ ensures that it induces a valid RKHS, making it compatible with standard kernel-based learning algorithms. In this work, we use Support Vector Machines (SVMs) for graph classification, leveraging the theoretical guarantees of $K_{\mathrm{NAS}}$ to support convergence and generalization. Details are provided in \textbf{Appendix} B.

\begin{table*}[!t]
\setlength{\tabcolsep}{9pt}
\centering
\caption{\footnotesize Classification accuracy on numerical and heterogeneous attributed graphs benchmarks.}
\label{tab: accuracy2}
\small
\begin{tabular}{l cccccc}
\toprule
\textbf{Method} & \textbf{SYNTHETIC} & \textbf{SYNTHIE} & \textbf{ENZYMES} & \textbf{PROTEINS\_full} & \textbf{BZR} & \textbf{COX2} \\
\midrule
PK               & 46.2$\pm$3.6 & 46.2$\pm$3.6 & 21.5$\pm$3.4 & 59.6$\pm$0.2 & 78.8$\pm$5.5 & 63.9$\pm$ 3.0 \\
ML               & 47.7$\pm$7.3 & 49.0$\pm$8.3 & 33.2$\pm$5.8 & 71.1$\pm$4.6 & 81.3$\pm$6.2 & 75.6$\pm$1.6 \\
GH               & 74.3$\pm$5.6 & 73.8$\pm$7.3 & 65.7$\pm$0.8 & 74.8$\pm$0.3 & 76.5$\pm$1.0 & 76.4$\pm$1.4 \\
HGK-WL           & 96.0$\pm$0.3 & 82.0$\pm$0.4 & 63.0$\pm$0.7 & 75.9$\pm$0.2 & 78.6$\pm$0.6 & 78.1$\pm$0.5 \\
WWL              & 86.8$\pm$1.0 & 96.0$\pm$0.5 & 73.3$\pm$0.8 & 77.9$\pm$0.8 & 84.4$\pm$2.0 & 78.3$\pm$0.5 \\
RetGK            & 96.2$\pm$0.3 & 91.7$\pm$1.7 & 72.2$\pm$0.8 & 78.0$\pm$0.3 & 86.4$\pm$1.2 & 80.1$\pm$0.9 \\
RWK              & 74.7$\pm$4.3 & 66.8$\pm$5.1 & 76.4$\pm$4.5 & 79.3$\pm$6.1 & 86.2$\pm$5.6 & 81.2$\pm$5.3 \\
SWWL             & 83.4$\pm$4.3 & 84.6$\pm$6.8 & 66.7$\pm$5.0 & 75.1$\pm$6.0 & 85.4$\pm$4.1 & 78.6$\pm$5.9\\
\midrule
GIN              & 80.5$\pm$6.6 & 89.7$\pm$4.6 & 59.6$\pm$4.5 & 73.3$\pm$4.0 & 85.4$\pm$5.1 & 75.6$\pm$2.3 \\
GraphSAGE        & 88.0$\pm$7.3 & 51.3$\pm$9.9 & 58.2$\pm$6.0 & 73.0$\pm$4.5 & 81.2$\pm$4.2 & 73.9$\pm$1.7 \\
\midrule
\textbf{NASK} & \textbf{97.9$\pm$0.3} & \textbf{97.1$\pm$0.3} & \textbf{78.3$\pm$4.1} & \textbf{81.1$\pm$0.9} & \textbf{88.8$\pm$1.8} & \textbf{82.9$\pm$1.8} \\

\bottomrule
\end{tabular}
\end{table*}

\begin{table}[!b]
\centering
\caption{\footnotesize Classification accuracy on categorical attributed graphs benchmarks.}
\label{tab: accuracy}
\small
\setlength{\tabcolsep}{12pt}
{\renewcommand{\arraystretch}{1.1}
\begin{tabular}{l cccccc}
\toprule
\textbf{Method} & \textbf{MUTAG} & \textbf{NCI1} & \textbf{PTC\_MR} & \textbf{D\&D} & \textbf{ENZYMES} & \textbf{PROTEINS} \\
\midrule
WL-VH            & 86.7$\pm$7.3 & 85.2$\pm$2.2 & 64.9$\pm$6.4 & 78.3$\pm$0.3 & 50.7$\pm$7.3 & 75.0$\pm$0.3 \\
PK               & 76.6$\pm$5.2 & 82.1$\pm$2.1 & 51.7$\pm$3.7 & 77.7$\pm$4.2 & 44.0$\pm$6.3 & 73.1$\pm$4.7 \\
GH               & 82.5$\pm$5.8 & 71.0$\pm$2.3 & 60.2$\pm$9.4 & TIMEOUT & 37.5$\pm$0.8 & 74.8$\pm$2.4 \\
ML               & 87.2$\pm$7.5 & 79.7$\pm$1.8 & 64.5$\pm$5.8 & 78.6$\pm$4.0 & 48.5$\pm$7.8 & 74.2$\pm$4.4 \\
WWL              & 87.3$\pm$1.5 & 85.8$\pm$0.3 & 66.3$\pm$1.2 & 79.7$\pm$0.5 & 59.1$\pm$0.8 & 74.3$\pm$0.6 \\
RetGK            & 90.3$\pm$1.1 & 84.5$\pm$0.2 & 62.5$\pm$1.6 & 81.6$\pm$0.3 & 60.4$\pm$0.8 & 75.8$\pm$0.6 \\
RWK              & 93.6$\pm$3.7 & 85.5$\pm$0.3 & 69.5$\pm$6.1 & 72.7$\pm$4.2 & 64.1$\pm$3.6 & 71.3$\pm$4.1 \\
\midrule
GCN              & 85.6$\pm$5.8 & 80.2$\pm$2.0 & 64.2$\pm$4.3 & 65.9$\pm$6.3 & 20.6$\pm$3.7 & 63.1$\pm$3.8 \\
GIN              & 89.4$\pm$5.6 & 82.7$\pm$1.7 & 64.6$\pm$7.0 & 75.3$\pm$2.9 & 44.5$\pm$4.1 & 72.8$\pm$3.6 \\
GraphSAGE        & 83.6$\pm$9.6 & 76.0$\pm$1.8 & 68.8$\pm$4.5 & 72.9$\pm$2.0 & 46.1$\pm$5.4 & 74.3$\pm$3.8 \\
KerGNN           & 81.5$\pm$0.4 & 82.8$\pm$1.8 & 70.5$\pm$0.8 & 77.6$\pm$3.7 & 59.5$\pm$4.5 & 75.8$\pm$3.5 \\
KAGNN            & 85.1$\pm$5.6 & 79.2$\pm$1.8 & 69.9$\pm$3.8 & 71.2$\pm$3.6 & 59.1$\pm$4.5 & 75.7$\pm$3.5 \\
\midrule
\textbf{NASK} & \textbf{95.9$\pm$3.6} & \textbf{88.0$\pm$4.5} & \textbf{76.7$\pm$2.6} & \textbf{82.1$\pm$0.6} & \textbf{67.7$\pm$6.5} & \textbf{82.6$\pm$1.3} \\
\bottomrule
\end{tabular}}
\end{table}

\section{Experiments}
In this section, we empirically evaluate NASK for attributed graph classification. Our experiments aim to answer:
\textbf{Q1.} Does NASK consistently achieve high accuracy across categorical, numerical, and heterogeneous attributed graphs?  
\textbf{Q2.} Can NASK maintain strong performance on large-scale real-world attributed graphs?  
\textbf{Q3.} How do the three key components of NASK contribute to its performance? 
\textbf{Q4.} \hh{What are the optimal WL iteration depth and star size for effectively capturing more structural features in NASK , and do these improvements come without incurring significant computational overhead compared to existing baselines?}

\subsection{Datasets and Experimental Setup} \label{Experiments}
We evaluate NASK on eleven benchmark datasets covering categorical (MUTAG~\cite{debnath1991structure}, NCI1~\cite{kriege2016valid}, PTC\_MR~\cite{toivonen2003statistical}, D\&D ~\cite{dobson2003distinguishing}, PROTEINS~\cite{borgwardt2005protein}), numerical (SYNTHETIC~\cite{kazius2005derivation}, SYNTHIE~\cite{morris2016faster}), and heterogeneous attributes (ENZYMES~\cite{togninalli2019wasserstein}, PROTEINS\_full~\cite{borgwardt2005protein}  BZR~\cite{sutherland2003spline}, COX2~\cite{sutherland2003spline}), as well as four large-scale real-world graphs benchmarks (Pubmed ~\cite{sen2008collective}, Cora~\cite{mccallum2000automating}, Citeseer~\cite{pottenger2012social}, Pokec~\cite{lim2021large}). 
Baselines include nine attributed graph kernels and seven widely used Graph Neural Networks (GNNs). Further details are in \textbf{Appendix} D.1 and D.2. All baselines and NASK are evaluated using 10-fold cross-validations repeated ten times on the same hardware. The best results in the tables are highlighted in \textbf{bold}, and accuracy is reported as the mean~$\pm$~standard deviation (\%). Details of the experimental setup are provided in \textbf{Appendix} D.3.

\begin{table*}[!t]
\centering
\begin{minipage}{0.48\textwidth}
\centering
\captionof{table}{\footnotesize Classification accuracy on real-world attributed graphs benchmarks.}
\label{tab:realworld-acc}
\scriptsize 
\setlength{\tabcolsep}{5pt}
{\renewcommand{\arraystretch}{0.9}
\begin{tabular}{lcccc}
\toprule
\textbf{Method} & Pubmed & Cora & Citeseer &Pokec  \\
\midrule
GCN           &84.00$\pm$3.54  & 85.10$\pm$2.57 & 77.30$\pm$1.30 & 75.45$\pm$0.10\\
GAT           & 84.95$\pm$1.77 & 87.70$\pm$0.30 & 76.20$\pm$0.80 &71.77$\pm$6.10  \\
KernelGCN        & 87.84$\pm$0.12 & 88.40$\pm$0.24 & 80.28$\pm$0.03 & 78.94$\pm$0.10 \\
AKGNN  &80.17$\pm$0.78  &83.95$\pm$0.26 &73.55$\pm$0.21   &78.83$\pm$0.10\\
KAGNN         & 71.74$\pm$0.29 & 81.72$\pm$0.79 & 68.16$\pm$0.61  &81.07$\pm$0.10 \\
\midrule
\textbf{NASK} & \textbf{89.53 $\pm$0.01} & \textbf{89.24$\pm$0.31} & \textbf{80.78$\pm$0.28}  &\textbf{83.05 $\pm$0.10}\\
\bottomrule
\end{tabular}}
\end{minipage}
\hspace{8pt}%
\begin{minipage}{0.48\textwidth}
\centering
\captionof{table}{\footnotesize Ablation study: classification accuracy of NASK and its variants.}
\label{tab: ablation}
\scriptsize
\setlength{\tabcolsep}{5pt}
{\renewcommand{\arraystretch}{0.9}
\begin{tabular}{lcccc}
\toprule
\textbf{Variant} & MUTAG & PTC\_MR & PROTEINS\_full & BZR \\
\midrule
\textbf{NASK}          & \textbf{95.9$\pm$3.6} & \textbf{76.7$\pm$2.7} & \textbf{81.1$\pm$0.9} & \textbf{88.8$\pm$1.8} \\
\midrule
w/ random walk  & 85.7$\pm$2.7 & 69.1 $\pm$1.5 & 73.5$\pm$1.5 & 82.7$\pm$0.2 \\
\midrule
w/ One-Hot  & 89.5$\pm$4.3 & 71.0$\pm$0.4 &76.4$\pm$2.2  & 85.8$\pm$2.0 \\
w/ Euclidean & 89.9$\pm$4.3 & 72.1$\pm$1.5 & 78.9$\pm$1.8 & 85.3$\pm$1.3 \\
\midrule

w/o WL Iteration & 89.2$\pm$0.3 & 71.9$\pm$1.4 & 77.6$\pm$1.4 & 84.2$\pm$0.4 \\

\bottomrule
\end{tabular}}
\end{minipage}
\end{table*}

\subsection{Evaluation on Diverse Attributed Datasets}
We evaluate NASK on eleven benchmark datasets with categorical, numerical, and heterogeneous attributes to answer \textbf{Q1}. As shown in Table~\ref{tab: accuracy2} and Table~\ref{tab: accuracy}, \textit{NASK consistently outperforms state-of-the-art graph kernels and GNNs across diverse attributed datasets.} \hh{Among these, on the PTC\_MR dataset, NASK outperforms the best existing method by 6.8\% in accuracy. On other numerical and heterogeneous attributed datasets (e.g., PROTEINS\_full), our model achieves a 2.4\% performance improvement.} These results highlight NASK’s effectiveness and generalization ability across heterogeneous attributed graphs, \hh{with the our proposed attribute similarity $\mathsf{P}$ enabling stable modeling of diverse attributes.}

\subsection{Evaluation on Large-Scale Real-World Datasets}
To answer \textbf{Q2}, we evaluate NASK on four real-world graphs with diverse attributes, including large-scale (Pubmed), citation-based (Cora, Citeseer), and social network (Pokec). As shown in Table~\ref{tab:realworld-acc}, \textit{NASK consistently achieves the best performance on all benchmarks.} Overall, NASK achieves up to 2.0\% higher accuracy, compared to the best-performing baselines, consistently outperforming them on four large-scale real-world benchmarks. These results demonstrate the robustness and effectiveness of NASK in capturing both structural and attribute information.

\subsection{Ablation Study}\label{sec: E3}
\hh{To address \textbf{Q3}, we conducted ablation experiments on NASK’s components, focusing on star subgraphs, attribute similarity $\mathsf{P}$, and WL iteration. As shown in Table~\ref{tab: ablation}, replacing the star subgraph with a random walk leads to performance drops of 6.1\% to 10.2\% across different datasets. This highlights the importance of fixed-center representations (i.e., star subgraph) for capturing richer semantic relationships. On datasets with numerical and heterogeneous attributes (e.g., PROTEINS\_full), NASK outperforms one-hot encoding by 3.7\% and Euclidean distance by 2.2\%. The attribute similarity $\mathsf{P}$ enables better modeling of both numerical and categorical features, unlike one-hot encoding and Euclidean distance, which fail to capture category relationships and struggle with mixed data types. Moreover, removing the WL iteration causes a 3.5\% to 6.7\% performance drop, confirming that relying on one-hop neighborhoods limits the model’s ability to capture complex structures. This highlights the need for multi-hop refinement to enhance expressiveness and ensure accurate predictions.}

\begin{figure}[t]
    \centering
    \subfloat[Accuracy with NASK of different depths of WL iteration \label{fig:wl_acc}]{
        \includegraphics[width=0.45\textwidth]{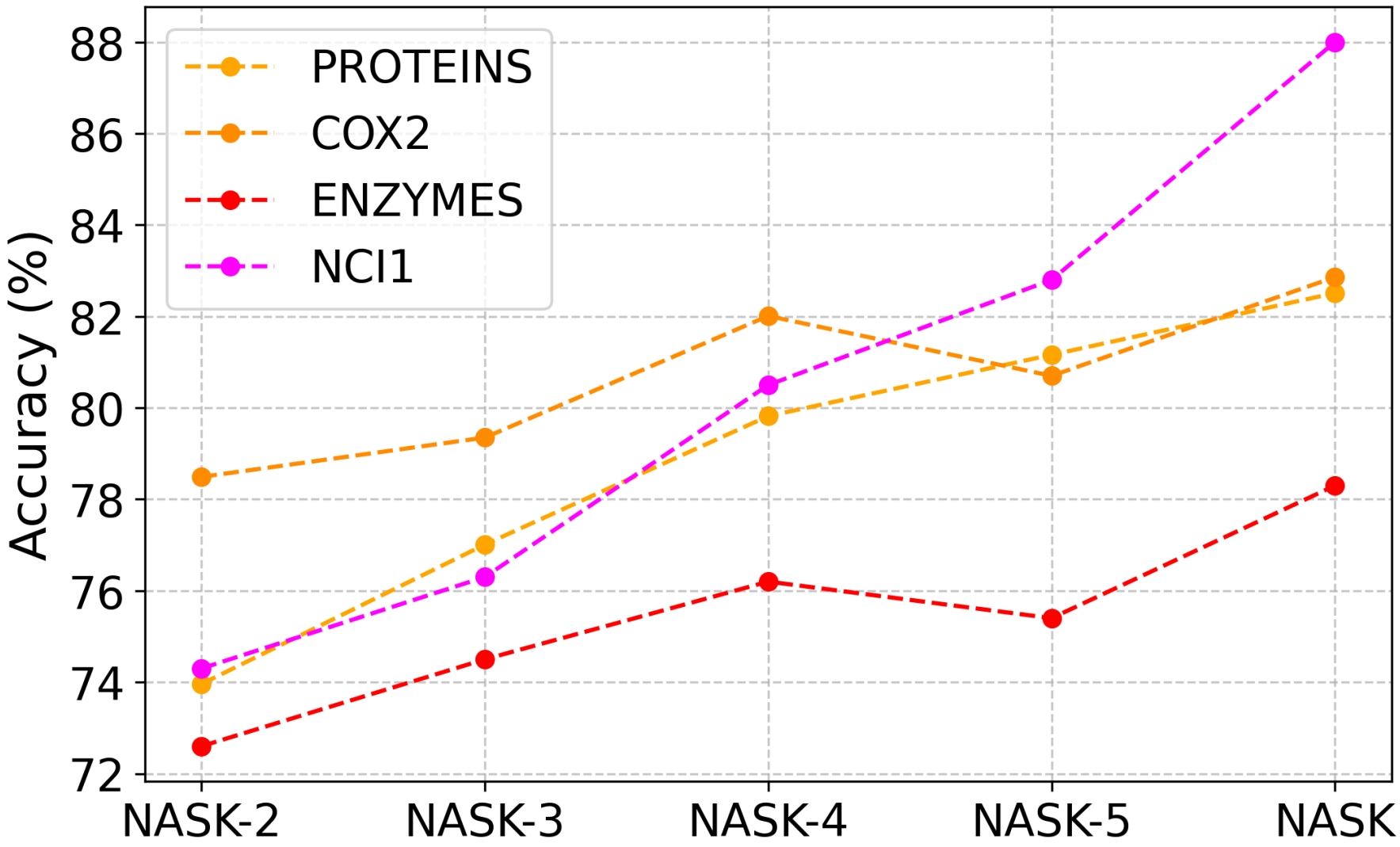}
    }
    \hfill
    \subfloat[Runtime with NASK of different depths of WL iteration \label{fig:wl_time}]{
        \includegraphics[width=0.45\textwidth]{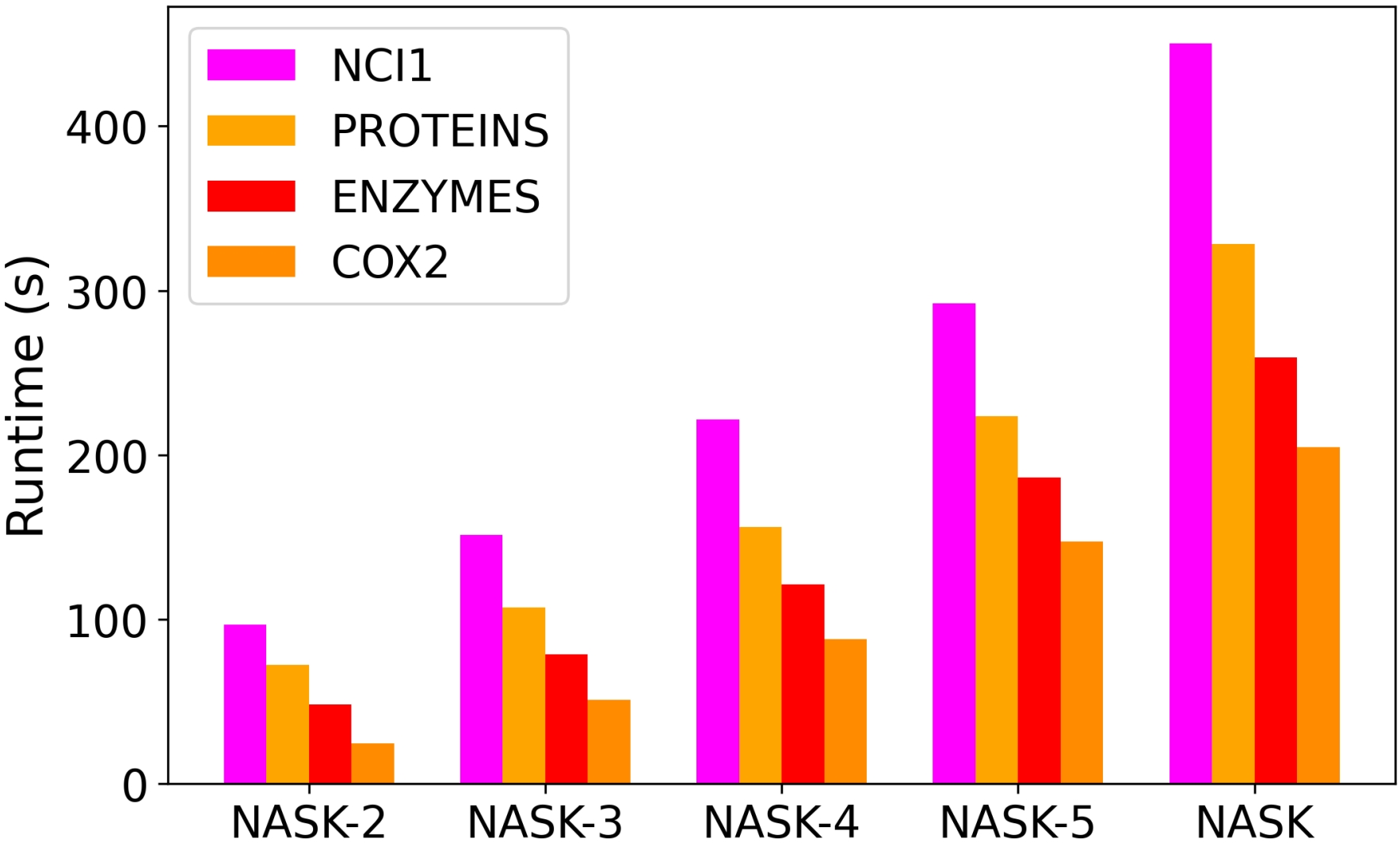}
    }
\caption{Impact of depths of WL iteration on classification accuracy and runtime of NASK across four datasets.}
    \label{fig:wl_multi}
\end{figure}

\subsection{Sensitivity to Depths of WL Iteration}
\hh{In this experiment, we evaluate NASK with varying depths of WL iteration across four datasets to answer \textbf{Q4}. In the Figure~\ref{fig:wl_multi}, NASK-$h$ denotes the variant with depth limited to $h$, while NASK without a subscript refers to the default configuration. Figure~\ref{fig:wl_acc} shows that increasing depth generally improves accuracy, with NASK-5 achieving 88.0\% on NCI1 and 82.5\% on PROTEINS. However, the improvement diminishes or even decreases after a certain depth, particularly on ENZYMES and COX2. Figure~\ref{fig:wl_time} shows that runtime increases with deeper WL iterations, especially from NASK-4 to NASK-5. On NCI1, runtime grows by 28.7\%, and on PROTEINS, by 41.8\%. The runtime is measured as the total time to compute the Gram matrix, including pairwise similarity calculation between graphs. Despite this, NASK-4 offers a good balance between accuracy and efficiency, making it ideal for computationally constrained environments. Furthermore, we conducted runtime comparisons with baselines to demonstrate that NASK achieves competitive efficiency, as detailed in Appendix E.2.}

To better understand NASK’s classification behavior, we perform a qualitative case study on the NCI1 dataset. Figure~\ref{fig:star_subgraph} visualizes $k$-hop star subgraphs ($k=1$ to $4$) centered on the highest-degree node in two correctly classified NCI1 graphs. At $k=1$, the star subgraph captures the immediate chemical environment, sufficient for simple functional groups. As $k$ increases, NASK captures larger substructures and relational dependencies, validating the effectiveness of our star subgraph extraction design. 

\begin{figure}[t]
    \centering
    \includegraphics[width=0.9\textwidth]{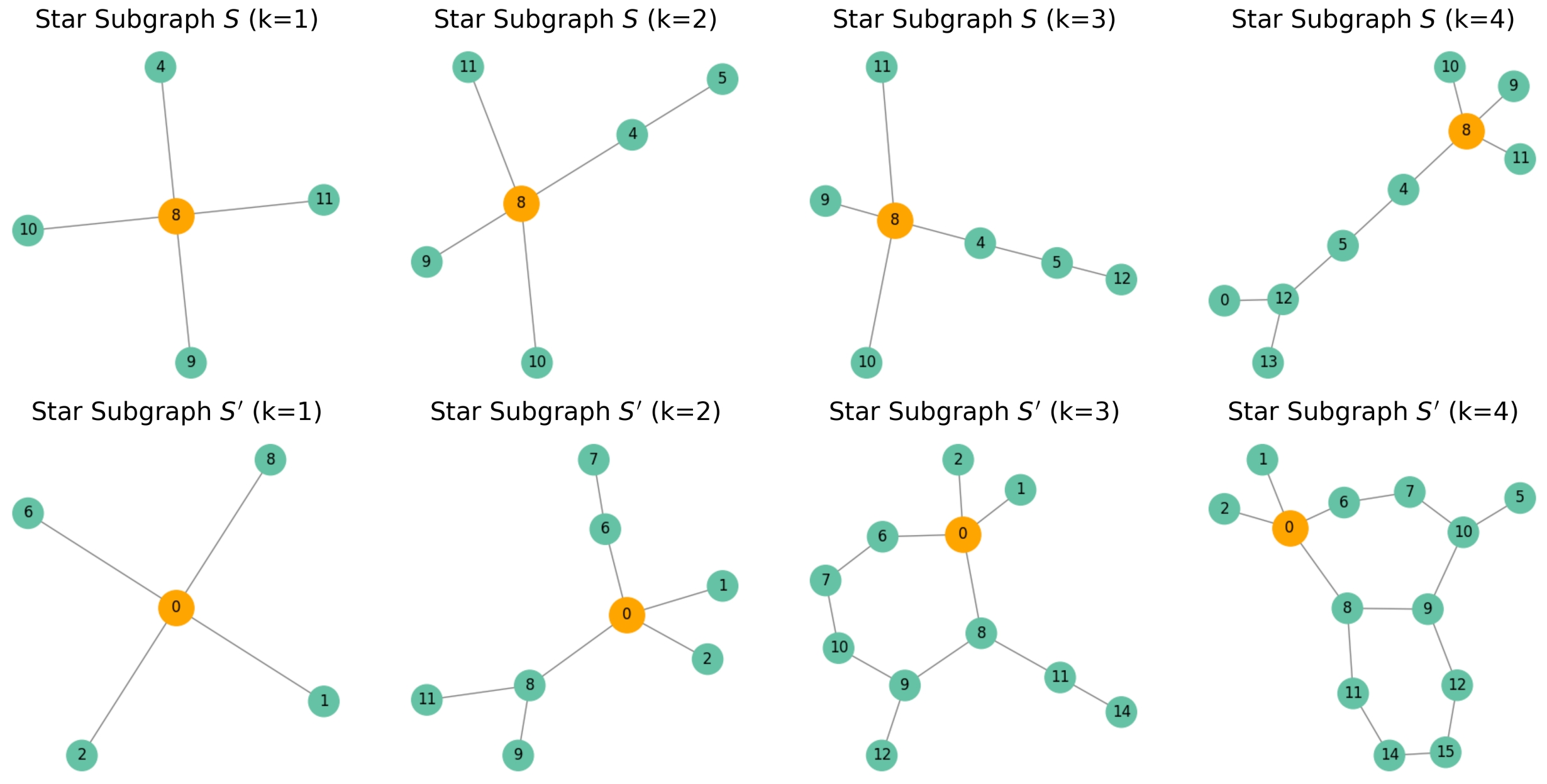}
    \caption{\footnotesize 
    Visualization of multi-hop star subgraphs centered on key nodes.}
    \label{fig:star_subgraph}
\end{figure}

\section{Conclusion}
We propose the \hh{Neighborhood-Aware Star Kernel (NASK)}, \hh{a novel graph kernel that jointly models heterogeneous attributes and neighborhood information}. By leveraging exponential transformation of the Gower similarity coefficient and WL-enhanced star subgraphs, NASK effectively captures rich semantic representations while guaranteeing positive definiteness. We theoretically establish its validity within kernel learning frameworks and empirically demonstrate its robustness through extensive experiments. NASK consistently outperforms strong baselines across diverse attributed graph benchmarks, demonstrating superior accuracy, scalability, and broad applicability on real-world graph classification tasks.

\newpage
\appendix
\section{Detailed Proofs}
\subsection{Detailed Proof of Lemma~\ref{lemma1}} \label{proof of f_d}
\begin{definition}[Conditionally Negative Definite Function]
A symmetric function $f: \mathcal{X} \times \mathcal{X} \rightarrow \mathbb{R}$ is said to be \emph{conditionally negative definite (CND)} if for any finite set $\{x_1, x_2, \dots, x_n\} \subset \mathcal{X}$ and any real coefficients $\alpha_1, \alpha_2, \dots, \alpha_n \in \mathbb{R}$ satisfying $\sum_{i=1}^n \alpha_i = 0$, the following inequality holds:
\begin{align*}\label{cnd}
    \sum_{i=1}^n \sum_{j=1}^n \alpha_i \alpha_j f(x_i, x_j) \leq 0.
\end{align*}
\end{definition}
\begin{proof}
We already have that $f_d$ is a symmetric function. To prove that $f_d$ is conditionally negative definite (CND), we need to show that for any finite set $\{x_1, \ldots, x_n\} \subset \mathcal{X}_d$ and real coefficients $\alpha_1, \ldots, \alpha_n \in \mathbb{R}$ satisfying $\sum_{i=1}^n \alpha_i = 0$, the following inequality holds:
\begin{align*}
\sum_{i=1}^n \sum_{j=1}^n \alpha_i \alpha_j f_d(x_i, x_j). \end{align*}

We prove the inequality as follows:
\begin{align*}
    \sum_{i=1}^n \sum_{j=1}^n \alpha_i \alpha_j f_d(x_i, x_j)
    &= \sum_{i=1}^n \sum_{j=1}^n \alpha_i \alpha_j \left(1 - s_d(x_i, x_j)\right) \\
    &= \sum_{i=1}^n \sum_{j=1}^n \alpha_i \alpha_j - \sum_{i=1}^n \sum_{j=1}^n \alpha_i \alpha_j s_d(x_i, x_j) \\
    &= 0 - \sum_{i=1}^n \sum_{j=1}^n \alpha_i \alpha_j s_d(x_i, x_j),
\end{align*}
where the first term equals to 0 due to the constraint $\sum_{i=1}^n \alpha_i = 0$.

Since $s_d(x_i, x_i) \ge 0$ for all $i = j$, and $s_d(x_i, x_j) = 0$ for $i \ne j$, the double sum reduces to:
\begin{align*}
\sum_{i=1}^n \sum_{j=1}^n \alpha_i \alpha_j s_d(x_i, x_j)
= \sum_{i=1}^n \alpha_i^2 s_d(x_i, x_i) \ge 0.
\end{align*}
Therefore, we conclude that:
\begin{align*}
\sum_{i=1}^n \sum_{j=1}^n \alpha_i \alpha_j f_d(x_i, x_j) \le 0.
\end{align*}

\end{proof}

\subsection{Detailed Proof of Lemma~\ref{lemma2}}\label{proof of $s'_d$}
\begin{proof}
We have established in Lemma~\ref{lemma1} that $f_d(x_d, x_d') = 1 - s_d(x_d, x_d')$ is CND. Then by the classical result from Berg et al.~\cite{berg1984harmonic}, the exponential transformation given by the following equation defines a positive definite kernel for all $\gamma > 0$.

\begin{align*}
   k(x_d, x_d') = \exp(-\gamma f_d(x_d, x_d')) = \exp\left( -\gamma (1 - s_d(x_d, x_d')) \right).
\end{align*}

Hence, $s_d'(x_d, x_d')$ is a valid positive definite kernel. 
\end{proof}

\subsection{Detailed Proof of Theorem~\ref{theorem 1}}\label{proof of P}
\begin{proof}
$\mathsf{P}$ is defined as:
\begin{align*}
    \mathsf{P}(v, v') = \frac{1}{|\mathrm{A}\cup \mathrm{A}'|} \sum_{d=1}^{|\mathrm{A}\cup \mathrm{A}'|} s_d'(\lambda(\mathrm{A}, v)_d, \lambda'(\mathrm{A}', v')_d).
\end{align*}
We have that each $s_d'$ is a positive definite kernel on the domain of the $d$-th attribute (Lemma~\ref{lemma2}). By the closure properties of positive definite kernels:

\begin{itemize}
    \item A finite non-negative linear combination of positive definite kernels is also positive definite.
    \item The sum $\sum_{d=1}^D s_d'(\cdot, \cdot)$ is thus positive definite.
    \item Scaling by a positive constant $1/D$ preserves positive definiteness.
\end{itemize}

Therefore, the function is a positive definite kernel on the attribute space of nodes $v$ and $v'$, since it is a normalized sum of positive definite kernels.

\end{proof}

\subsection{Detailed Proof of Theorem~\ref{theorem 2}}\label{proof of $k_s$}
\begin{proof}
The kernel $k_\mathrm{s}$ compares two star subgraphs by summing the pairwise similarities between their decomposed elements (vertices and edges). Specifically:
\begin{equation}
    k_\mathrm{s}(\mathrm{S}, \mathrm{S'}) = \sum_{n \in R^{-1}(\mathrm{S})} \sum_{n' \in R^{-1}(\mathrm{S}')}  \mathsf{P}(\mathrm{C},\mathrm{C'})\mathsf{P}(n, n').
\end{equation}
We already have that the function $\mathsf{P}(n, n')$ is positive definite in Theorem~\ref{theorem 1}, as it is computed via $s_d'$, which is positive definite given by Lemma~\ref{lemma2}.

\hh{A product of two positive definite kernels (i.e., $\mathsf{P}(\mathrm{C},\mathrm{C'})\mathsf{P}(n, n')$) remains positive definite, and a finite sum of positive definite kernels preserves this property. }

Therefore, $k_\mathrm{s}(\cdot, \cdot)$ is a valid positive definite kernel. \hh{This ensures that $k_\mathrm{s}$ inherits the positive definiteness required for valid kernel-based learning methods.}
\end{proof}

\subsection{Detailed Proof of Theorem~\ref{Theorem2}}\label{proof of $K_s$}
\begin{proof}
$K_{\mathrm{S}}$ is defined as:

\begin{align*}
K_\mathrm{S}(\mathrm{G}, \mathrm{G}') = \sum_{\mathrm{S} \in \hat{\mathcal{S}}(\mathrm{G})} \sum_{\mathrm{S}' \in \hat{\mathcal{S}}(\mathrm{G}')} k_\mathrm{s}(\mathrm{S}, \mathrm{S}'),
\end{align*}

where $k_\mathrm{s}(\mathrm{S}, \mathrm{S}')$ is the local star kernel proven to be positive definite in Theorem~\ref{theorem 2}.

Each $k_\mathrm{s}(\mathrm{S}, \mathrm{S}')$ is positive definite over the space of star subgraphs. The kernel $K_\mathrm{S}$ aggregates these values over all pairs of stars across the two graphs. A finite sum of positive definite kernels is itself positive definite, due to the closure properties of kernel functions.

Thus, $K_\mathrm{S}(\cdot, \cdot)$ is a valid positive definite kernel on the space of attributed graphs.
\end{proof}

\subsection{Weisfeiler–Lehman (WL) Algorithm}\label{WL Algorithm}
WL algorithm~\cite{Weisfeiler1968} is a color refinement algorithm that iteratively updates the labels of nodes to capture structural information. Initially, all nodes $v \in \mathrm{G}$ are assigned the same color: $\mathrm{WL}^0(\mathrm{G}, v) = 1, \quad \forall v \in \mathrm{G}.\mathrm{V}.$ At each iteration $h > 0$, the label of each node $v$ is updated based on its current label and the multiset of its neighbors labels: $
\mathrm{WL}^h(\mathrm{G}, v) = \operatorname{HASH}\left(\mathrm{WL}^{h-1}(\mathrm{G}, v), 
\operatorname{Multiset}\left\{\mathrm{WL}^{h-1}(\mathrm{G}, u) \mid u \in \mathcal{N}(v)\right\}\right),
$
where $\operatorname{HASH}(\cdot)$ is an injective hashing function. This process continues until the colors stabilize or a maximum iteration depth $H$ is reached. Two graphs $\mathrm{G}$ and $\mathrm{G}'$ are considered non-isomorphic if the multisets of node labels differ at any iteration.

\subsection{Detailed Proof of Theorem~\ref{theorem 4}}\label{proof of K_SWL}
\begin{proof}
At iteration $H$, the kernel $K_{\mathrm{NAS}}^{(H)}$ is defined as:

\begin{align*}
K_{\mathrm{NAS}}^{(H)}(\mathrm{G}, \mathrm{G}') = \sum_{h = 1}^{H}\sum_{\mathrm{S} \in \hat{\mathcal{S}}^{(h)}(\mathrm{G})} \sum_{\mathrm{S}' \in \hat{\mathcal{S}}^{(h)}(\mathrm{G}')} k_\mathrm{s}(\mathrm{S}, \mathrm{S}'),
\end{align*}

where $\hat{\mathcal{S}}^{(h)}(\mathrm{G})$ denotes the set of $h$-hop star subgraphs in $\mathrm{G}$, and $k_\mathrm{s}(s, s')$ is the local star kernel established in Theorem~\ref{theorem 2} to be positive definite.

Each $h$-hop star subgraph $\mathrm{S}^{(h)}$ is defined based on a fixed-radius neighborhood and preserves the underlying graph structure and attributes. The aggregation of $k_\mathrm{s}$ over all pairs of such subgraphs between two graphs defines a valid R-convolution kernel~\cite{haussler1999convolution}.

Since:
\begin{itemize}
  \item $k_\mathrm{s}$ is positive definite.
  \item The set of $h$-hop star subgraphs is finite for finite graphs.
  \item The triple sum is a finite sum of PD kernels.
\end{itemize}

It follows that $K_{\mathrm{NAS}}^{(H)}$ is a valid positive definite kernel due to the closure of positive definite kernels under summation.

\end{proof}

\subsection{Complexity Analysis}\label{Complexity Analysis}
We analyze the computational complexity of the proposed NASK kernel $K_{\mathrm{NAS}}^{(H)}$ in terms of graph size and WL iteration. Let $n$ and $m$ denote the number of nodes and edges in a graph, respectively, and let $d$ be the dimensionality of node or edge attributes. For a single graph $\mathrm{G}$, the number of extracted star subgraphs is $\mathcal{O}(n)$, and after $H$ iterations of WL expansion, the number of expanded star subgraphs remains $\mathcal{O}(n)$ assuming bounded neighborhood size. Each kernel computation $k_\mathrm{s}(\mathrm{S}, \mathrm{S}')$ between a pair of star subgraphs requires comparing all attribute pairs between two \hh{star graphs , yielding a cost of $\mathcal{O}((n+m)^2d)$ per pair}.

Given two graphs $\mathrm{G}$ and $\mathrm{G}'$, the NASK kernel sums over all pairs of expanded star subgraphs across $H$ iterations:
\begin{align*}
K_{\mathrm{NAS}}^{(H)}(\mathrm{G}, \mathrm{G}') = \sum_{h=1}^{H} \sum_{\mathrm{S} \in \hat{\mathcal{S}}^{(h)}(\mathrm{G})} \sum_{\mathrm{S}' \in \hat{\mathcal{S}}^{(h)}(\mathrm{G}')} k_\mathrm{s}(\mathrm{S}, \mathrm{S}').
\end{align*}
The total computational complexity is therefore $\mathcal{O}(Hn^2(n+m)^2d)$ in the worst case, assuming both input graphs have $\mathcal{O}(n)$ nodes and bounded degree. In practice, the computation is often faster due to early pruning based on core similarity thresholds (see Eq.~\ref{k_s}), which skips many redundant or dissimilar subgraph comparisons.

\section{Extension to Classification Tasks}\label{Classification Tasks}

The proposed \hh{Neighborhood-Aware Star Kernel} (NASK) naturally supports downstream classification tasks through its induced Reproducing Kernel Hilbert Space (RKHS) structure.

Given an attributed graph \(\mathrm{G} \in \mathcal{X}\), the NASK defines a positive definite kernel \(K_{\mathrm{NAS}}: \mathcal{X} \times \mathcal{X} \rightarrow \mathbb{R}\), corresponding to an implicit feature map
\begin{equation}
\varphi: \mathrm{G} \mapsto \varphi(\mathrm{G}) \in \mathcal{H},
\end{equation}
where \(\mathcal{H}\) is the RKHS associated with \(K_{\mathrm{NAS}}\), satisfying the reproducing property
\begin{equation}
K_{\mathrm{NAS}}(\mathrm{G}, \mathrm{G}') = \langle \varphi(\mathrm{G}), \varphi(\mathrm{G}') \rangle_{\mathcal{H}}.
\end{equation}

Given a training set \(\{(\mathrm{G}_i, y_i)\}_{i=1}^N\), where \(y_i \in \{+1, -1\}\), the regularized empirical risk minimization problem is formulated as:
\begin{equation}
\min_{f \in \mathcal{H}} \frac{1}{N} \sum_{i=1}^N \ell(f(\mathrm{G}_i), y_i) + \lambda \|f\|_{\mathcal{H}}^2,
\end{equation}
where \(\ell(\cdot, \cdot)\) is a convex loss function (e.g., hinge loss for SVMs), and \(\lambda > 0\) is the regularization parameter.

By the Representer Theorem, the optimal solution \(f^*\) admits a finite kernel expansion:
\begin{equation}
f^*(\cdot) = \sum_{i=1}^N \alpha_i K_{\mathrm{NAS}}(\mathrm{G}_i, \cdot),
\end{equation}
where \(\{\alpha_i\}_{i=1}^N \subset \mathbb{R}\) are the learned coefficients.

Substituting into the primal objective, we obtain the dual optimization problem:
\begin{align}
\max_{\boldsymbol{\alpha} \in \mathbb{R}^N} \quad \sum_{i=1}^N \alpha_i - \frac{1}{2} \sum_{i,j=1}^N \alpha_i \alpha_j y_i y_j K_{\mathrm{NAS}}(\mathrm{G}_i, \mathrm{G}_j) \\ \notag
\quad \text{subject to} \quad 0 \leq \alpha_i \leq C,\quad \sum_{i=1}^N \alpha_i y_i = 0,
\end{align}
where \(C\) is the SVM margin parameter.

The resulting classifier takes the following form:
\begin{equation}
f(\mathrm{G}) = \mathrm{sign}\left( \sum_{i=1}^N \alpha_i y_i K_{\mathrm{NAS}}(\mathrm{G}_i, \mathrm{G}) + b \right),
\end{equation}
where \(b\) is the learned bias.

Thanks to the positive definiteness of \(K_{\mathrm{NAS}}\), the induced RKHS \(\mathcal{H}\) is well-defined, guaranteeing the theoretical soundness of the learning process. Consequently, standard kernel-based methods such as Support Vector Machines (SVMs) can be directly applied for attributed graph classification tasks.

\section{Pseudocode}\label{Pseudo}
\FloatBarrier 
The Pseudocode of $K_\mathrm{S}$ and $K_{\mathrm{NAS}}^{(H)}$ are listed in Algorithm 1 and Algorithm 2.

\begin{algorithm}[h]\label{algorithm 1}
\caption{Local Graph Kernel $K_\mathrm{S}(\mathrm{G}, \mathrm{G}')$}
\label{alg:star_kernel}
\KwIn{Two attributed graphs $\mathrm{G}, \mathrm{G}'$}
\KwOut{Local graph kernel $K_\mathrm{S}(\mathrm{G}, \mathrm{G}')$}

$\hat{\mathcal{S}}(\mathrm{G}) \leftarrow$ Extract all star subgraphs from $\mathrm{G}$ using Eq.~\ref{it0}\;
$\hat{\mathcal{S}}(\mathrm{G}') \leftarrow$ Extract all star subgraphs from $\mathrm{G}'$\;

$K_\mathrm{S} \leftarrow 0$\;

\ForEach{$\mathrm{S} \in \hat{\mathcal{S}}(\mathrm{G})$}{
    \ForEach{$\mathrm{S}' \in \hat{\mathcal{S}}(\mathrm{G}')$}{
        $K_\mathrm{S} \leftarrow K_\mathrm{S} + k_\mathrm{s}(\mathrm{S}, \mathrm{S}')$\;
    }
}
\Return{$K_\mathrm{S}$}
\end{algorithm}

\begin{algorithm}[h]\label{algorithm 2}
\caption{Neighborhood-Aware Star Kernel $K_{\mathrm{NAS}}^{(H)}(\mathrm{G}, \mathrm{G}')$}
\label{alg:swl_kernel}
\KwIn{Graphs $\mathrm{G}, \mathrm{G}'$; max iteration $H$}
\KwOut{NASK kernel $K_{\mathrm{NAS}}^{(H)}(\mathrm{G}, \mathrm{G}')$}

$\hat{\mathcal{S}}^{(1)}(\mathrm{G}) \leftarrow$ Extract star subgraphs from $\mathrm{G}$\;
$\hat{\mathcal{S}}^{(1)}(\mathrm{G}') \leftarrow$ Extract star subgraphs from $\mathrm{G}'$\;

\For{$h = 1$ \KwTo $H$}{
    $\hat{\mathcal{S}}^{(h)}(\mathrm{G}) \leftarrow \{ \mathcal{L}(\mathrm{S}) \mid \mathrm{S} \in \hat{\mathcal{S}}^{(h-1)}(\mathrm{G}) \}$\;
    $\hat{\mathcal{S}}^{(h)}(\mathrm{G}') \leftarrow \{ \mathcal{L}(\mathrm{S}') \mid \mathrm{S}' \in \hat{\mathcal{S}}^{(h-1)}(\mathrm{G}') \}$\;
}

$K_{\mathrm{NAS}} \leftarrow 0$\;

\For{$h = 1$ \KwTo $H$}{
    \ForEach{$\mathrm{S} \in \hat{\mathcal{S}}^{(h)}(\mathrm{G})$}{
        \ForEach{$\mathrm{S}' \in \hat{\mathcal{S}}^{(h)}(\mathrm{G}')$}{
            $K_{\mathrm{NAS}} \leftarrow K_{\mathrm{NAS}} + k_\mathrm{s}(\mathrm{S}, \mathrm{S}')$\;
        }
    }
}
\Return{$K_{\mathrm{NAS}}$}
\end{algorithm}

\section{Details of Experiments}\label{appendix: experimental}

\subsection{Dataset Details}\label{appendix: datasets}
We provide a detailed description of all benchmark datasets used in our experiments, categorized by attribute type and application domain. A summary of the dataset statistics, including the number of graphs, attribute types, edge information, and class distribution, is presented in Table~\ref{tab: dataset}.

\paragraph{Categorical Attributed Graphs}\textbf{MUTAG} is a dataset of 188 aromatic compounds where each graph represents a molecule. Nodes correspond to atoms with categorical labels (e.g., C, O, N), and each graph is labeled based on mutagenicity. \textbf{NCI1} consists of 4,110 chemical compound graphs from the National Cancer Institute's screening database. Each node represents an atom with discrete labels, and the task is to classify compounds as active or inactive. \textbf{PTC\_MR} is a dataset of chemical compounds with 344 graphs. Each graph represents a molecular structure and is labeled based on carcinogenicity for rodents. \textbf{D\&D} is a dataset of 1,178 protein structures, where nodes represent amino acids and edges denote spatial closeness. \textbf{PROTEINS} is a dataset in which nodes are secondary structure elements. It has 3 discrete labels, which represent spirals, slices, or turns.

\paragraph{Numerical Attributed Graphs}\textbf{SYNTHETIC} is composed of 4,337 molecular graphs constructed for binary classification. Each node is described by a high-dimensional numerical vector, testing the ability to model rich continuous features. \textbf{SYNTHIE} is a synthetic dataset of 400 graphs with 4 class labels, designed to test sensitivity to both structural and attribute-based patterns. Each node has a 15-dimensional numerical feature vector and no categorical label.

\paragraph{Heterogeneous Attributed Graphs}\textbf{ENZYMES} contains 600 protein tertiary structure graphs labeled by enzyme class (6-way classification). Each node has a categorical label for structural role and 18 real-valued attributes for physico-chemical properties. \textbf{PROTEIN\_full} consists of protein graphs where nodes represent amino acids and edges indicate chemical interactions. Attributes are mixed with both structural and chemical properties. \textbf{BZR} and \textbf{COX2} contain graphs representing drug molecules. Each node (atom) has a discrete element label and continuous features such as partial charges.

\paragraph{Large-scale Real-world Attributed Graphs}\textbf{Pubmed} is a citation network with nodes representing papers and edges denoting citation links. Node features are TF-IDF weighted word vectors from paper abstracts. \textbf{Cora} is a citation graph whose nodes represent papers and edges are citation links. The graph consists of 2,708 papers and 5,278 edges, with 1,433 attribute dimensions. \textbf{Citeseer} is another citation network with 3,327 papers and 4,732 edges, including 3,703 attribute dimensions. \textbf{Pokec} is a large-scale social network graph where nodes represent users and edges correspond to friendship relationships. The graph contains rich user attributes such as age, gender, and interests, making it ideal for social influence modeling and community detection tasks.

\begin{table}[ht]
\small
\caption{Dataset Statistics}
\label{tab: dataset}
\setlength{\tabcolsep}{4pt}
\renewcommand{\arraystretch}{1.2}
\centering
\begin{tabular}{l c c c c}
\toprule
\textbf{Dataset} & \textbf{Node Attr.} & \textbf{Edge Attr.} & \textbf{\# Classes} & \textbf{\# Graphs} \\ 
\midrule
MUTAG           & Categorical     & -   & 2   & 188   \\ 
NCI1            & Categorical     & -   & 2   & 4110  \\ 
PTC\_MR         & Categorical     & -   & 2   & 344   \\ 
D\&D            & Categorical     & -   & 2   & 1178  \\ 
PROTEINS        & Categorical     & \checkmark   & 2   & 1113  \\ 
SYNTHETIC       & Numerical       & -   & 2   & 4337  \\ 
SYNTHIE         & Numerical       & -   & 4   & 400   \\ 
ENZYMES         & Heterogeneous           & \checkmark   & 6   & 600   \\ 
PROTEINS\_full  & Heterogeneous           & \checkmark   & 2   & 1113  \\ 
BZR             & Heterogeneous           & \checkmark   & 2   & 405   \\ 
COX2            & Heterogeneous           & \checkmark   & 2   & 467   \\ 
\midrule
Pubmed          & Real-world      & \checkmark   & 3   & 1     \\ 
Cora            & Real-world      & \checkmark   & 7   & 1     \\ 
Citeseer        & Real-world      & \checkmark   & 6   & 1     \\ 
Pokec & Real-world & \checkmark & 2 & 1     \\ 
\bottomrule
\end{tabular}
\end{table}

\subsection{Baselines Details}\label{appendix: baselines}
\paragraph{Attributed Graph Kernels}We evaluate NASK against nine state-of-the-art attributed graph kernels, including WL-VH~\cite{shervashidze2011weisfeiler}, PK~\cite{neumann2012efficient}, HGK~\cite{morris2016faster}, WWL~\cite{togninalli2019wasserstein}, GH~\cite{feragen2013scalable}, ML~\cite{kondor2016multiscale}, RWK~\cite{wijesinghe2021regularized}, SWWL~\cite{perez2024gaussian} and RetGK~\cite{zhang2018retgk}. These kernels effectively capture both structural and attribute-based similarities through various strategies. For instance, WL-VH and PK rely on neighborhood aggregation and iterative relabeling to encode local structural patterns, while HGK introduces hashing mechanisms to accelerate graph comparisons. Advanced methods like WWL and RetGK utilize optimal transport and return probabilities, respectively, to capture fine-grained similarities across graph nodes and edges, enhancing the expressiveness of attributed graph representations.

\paragraph{Graph Neural Networks (GNNs)}The GNN baselines include GCN~\cite{DBLP:conf/iclr/KipfW17}, GIN~\cite{xu2018powerful}, GAT~\cite{velivckovic2018graph}, GraphSAGE~\cite{hamilton2017inductive},
AKGNN~\cite{ju2022adaptive},  KerGNN~\cite{feng2022kergnns}, and KAGNN~\cite{han2024kagnn}. These models leverage message passing and kernel-based similarity alignment to enhance representation learning. GCN aggregates information from neighbors using spectral convolutions, while GIN focuses on injective aggregation to improve expressiveness. GraphSAGE introduces inductive learning by sampling neighbors, facilitating scalability to larger graphs. GAT employs attention for adaptive neighbor aggregation, while AKGNN extends this with kernel-based attention to better capture attribute and structural information. Kernel-based methods like KerGNN and KAGNN further incorporate similarity metrics to better align node representations, effectively capturing both structural and semantic relationships.

These baselines collectively cover a wide range of structural and attribute modeling graphs, and serve as strong comparators for validating the effectiveness of NASK.

\subsection{Details of Experimental Setup }\label{appendix: hyper}

\paragraph{Configuration Details of Baselines and Our Method}
We evaluate NASK against a wide range of state-of-the-art baselines. For attributed graph kernels (e.g., WL, WWL, RetGK), we use the GraKeL library or official implementations, with hyperparameters set either by default or selected through small-scale validation. For classification, we adopt scikit-learn's C-SVM, tuning $C$ over ${10^{-3}, 10^{-2}, \dots, 10^{3}}$ via nested cross-validation. 

Graph neural network baselines (GCN, GraphSAGE, GIN, etc.) are implemented in PyTorch Geometric 2.0 with a uniform 2-layer architecture, each with 64 hidden units, ReLU activation, global sum pooling, and a softmax output layer. Models are trained for a maximum of 100 epochs using early stopping (patience = 10) and a validation split of 10\% from the training set. The learning rate is selected from $\{0.001, 0.005, 0.01\}$.

\paragraph{Evaluation Protocol}All models are evaluated using 10-fold cross-validations repeated ten times (30 randomized splits per dataset). We report the mean classification accuracy along with standard deviation. For multi-class datasets (e.g., ENZYMES, SYNTHIE), overall accuracy is used. Runtimes are reported in wall-clock time. All computations, including kernel construction and GNN training, are conducted on a machine equipped with an NVIDIA RTX 4090 GPU.

\section{Additional  Experiments}

\subsection{Comprehensive Ablation Analysis}
To evaluate the contribution of each component in our proposed NASK kernel, we conduct a systematic ablation study on several additional datasets. Figure~\ref{fig: ablation} visualizes the classification accuracy of NASK and its ablation variants across datasets with different attribute types. Specifically, datasets with simple categorical attributes are marked with solid symbols, while those with numerical and heterogeneous attributes use hollow markers. This visual distinction highlights how each variant performs under varying attribute complexities. We consider the following settings:

\textbf{w/ Random Walk}: replaces our star subgraph with a random walk-based structure;

\textbf{w/ One-Hot} and \textbf{w/ Euclidean}: simplify the attribute similarity function $\mathsf{P}$;

\textbf{w/o WL Iteration}: removes the iterative structural refinement via Weisfeiler-Lehman (WL) iterations.

\begin{figure}[htbp]
\centering
\includegraphics[width=\linewidth]{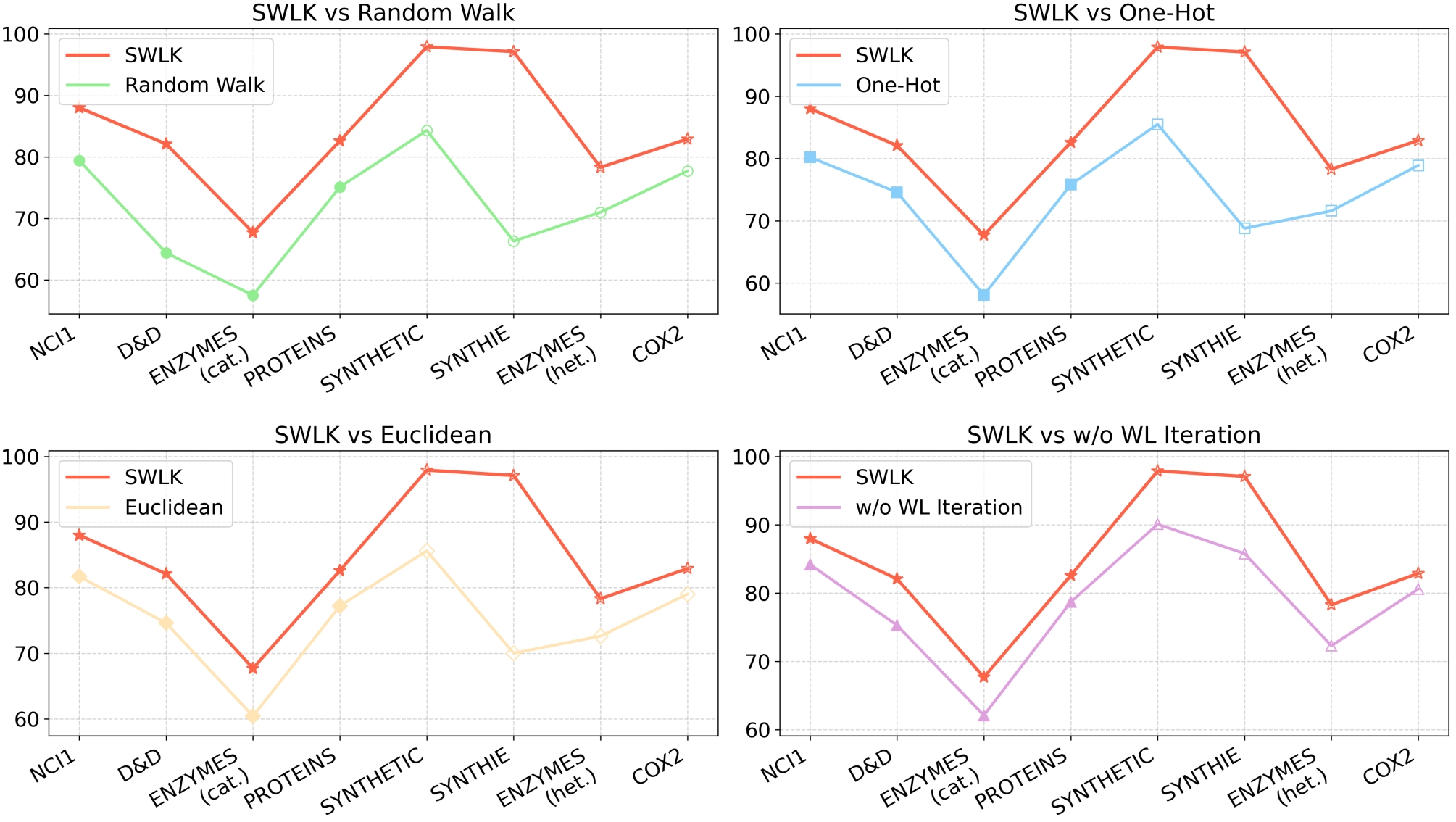}
\caption{\footnotesize Performance of NASK and its ablation variants on several additional datasets}
\label{fig: ablation}
\end{figure}

\paragraph{\textbf{Effect of Star Subgraphs}}
On structurally rich datasets such as D\&D and ENZYMES (cat.), replacing the star subgraph with a random walk results in performance drops of up to 21.5\%. This highlights the importance of using fixed-center representations that preserve local topology for capturing meaningful structural patterns.

\paragraph{\textbf{Effect of Attribute Similarity $\mathsf{P}$}}
On datasets with numerical and heterogeneous features (e.g., SYNTHIE, COX2), our adaptive attribute similarity significantly outperforms both one-hot and Euclidean variants. For example, on COX2, NASK improves classification accuracy by 4.0\% over one-hot and 3.9\% over Euclidean, demonstrating the expressiveness and adaptability of the similarity function $\mathsf{P}$.

\paragraph{\textbf{Effect of WL Iteration}}
Removing WL iteration consistently degrades performance across all datasets. The 4.7\% accuracy drop on PROTEINS demonstrates that relying solely on one-hop neighborhoods limits the model’s ability to encode complex structural information. This performance gap emphasizes the necessity of our multi-hop refinement design for enhancing expressiveness and ensuring accurate predictions.

Overall, NASK consistently achieves the highest accuracy across all datasets, highlighting the importance of its three core components: star subgraph extraction, attribute similarity computation, and WL iteration. These components are designed to work together to ensure strong performance, as removing any one of them leads to a significant drop in accuracy. The performance gaps between NASK and its variants are more pronounced on datasets with complex attributes, confirming the essential role of each component in enabling NASK to generalize across a wide range of graph types.

\subsection{Runtime Comparison with Baseline Methods}

Table~\ref{tab:accuracy2} presents the runtime comparison across six attributed graph benchmarks. The results reveal clear distinctions in computational efficiency between methods. Among all baselines, GH and ML exhibit the highest computational cost, often requiring multiple hours to process larger datasets such as NCI1 and PROTEINS\_full. For instance, ML takes over 7 hours on NCI1 and over 8 hours on MUTAG, indicating poor scalability when applied to even relatively small graphs with rich structure or attributes.

In contrast, NASK maintains a consistently low runtime across all datasets. Its runtime remains below 1 second on small datasets like MUTAG and PTC-MR, and scales moderately on larger datasets, requiring only 12 minutes on NCI1 and under 1 minute on PROTEINS\_full. This is in stark contrast to the steep growth seen in GH and ML. Compared to lightweight baselines such as PK, RetGK, and RWK, NASK remains competitive, sometimes slightly slower on the smallest datasets but generally faster on medium and large ones. Notably, on ENZYMES and BZR, NASK outperforms several baselines including GH, ML, and WWL, suggesting that it not only scales well but also adapts effectively to graphs with complex attributes or heterogeneity. Overall, the results demonstrate that NASK achieves improved performance without introducing significant runtime overhead, validating its efficiency and scalability in practical scenarios.

\begin{table*}[!t]
\setlength{\tabcolsep}{10pt}
\centering
\caption{\footnotesize Runtime on diverse attributed graphs benchmarks. All times are reported in hours, minutes and seconds.}
\label{tab:accuracy2}
\small
\begin{tabular}{l cccccc}
\toprule
\textbf{Method} & \textbf{MUTAG} & \textbf{NCI1} & \textbf{PTC-MR} & \textbf{PROTEINS\_full} & \textbf{ENZYMES} & \textbf{BZR} \\
\midrule
GH      & 24.79s      & 3h 43m 7.2s   & 1m 33.9s    & 3h 15m 5.36s   & 16m 36.48s   & 4m 11.74s  \\
PK      & 0.53s       & 10m 30.02s    & 1.79s       & 1m 10.25s      & 14.93s       & 7.24s      \\
ML      & 8h 16m 16.84s & 7h 18m 35.72s & 22m 9.56s   & 2h 48m 24.59s  & 1h 4m 52.54s & 47m 44.62s \\
WWL     & 3m 42s      & 16m 10s       & 1m 25s      & 13m 23.7s      & 3m 40.5s     & 1m 37.3s   \\
RetGK   & 0.9s        & 13m 3s        & 1.4s        & 2m 57s         & 37s          & 17s        \\
RWK     & 7s          & 12m 49s       & 3.6s        & 1m 49s         & 2m 19s       & 48s        \\
\midrule
\textbf{NASK} & 0.61s   & 12m 0.92s     & 0.98s       & 59.01s         & 13.89s       & 10.39s     \\
\bottomrule
\end{tabular}
\end{table*}

\subsection{Robustness to Node Attribute Perturbation}
We evaluate the robustness of NASK against noisy or missing node attributes on real-world attributed graphs. Although NASK is a deterministic kernel method and does not rely on gradient-based training like GNNs, its performance may still degrade when node attributes are corrupted. We conduct perturbation analysis on three molecular datasets with heterogeneous node attributes—BZR, COX2, and PROTEINS\_full. Two types of attribute noise are introduced:

\begin{itemize}
    \item \textbf{Gaussian noise addition}: Each node attribute \( x \) is perturbed as \( x' = x + \varepsilon \), where \( \varepsilon \sim \mathcal{N}(0, \sigma^2) \).
    \item \textbf{Feature masking}: A random subset of node attribute dimensions is set to zero for 10\%, 20\%, or 30\% of the nodes.
\end{itemize}

For each noise level \( \gamma \in \{0\%, 10\%, 20\%, 30\%\} \), we generate 10 perturbed datasets using different random seeds. For each setting, we compute the NASK kernel matrices and perform 10-fold cross-validations using a linear SVM. Classification accuracy is reported across multiple runs to evaluate performance under noise perturbations.

As illustrated in Figure~\ref{fig:robust-bzr-cox2}, NASK exhibits notable robustness under both types of perturbation. Under Gaussian noise (Figure~\ref{fig:robust-gaussian}), the model consistently maintains high accuracy across datasets: BZR shows minimal degradation, with median accuracy remaining above 86\%, and COX2 retains accuracy above 77\%. On PROTEINS\_full, the accuracy declines more gradually, likely due to attribute sparsity. In comparison, under feature masking (Figure~\ref{fig:robust-masking}), accuracy decreases somewhat more significantly, indicating that NASK tolerates continuous perturbations better than discrete information loss. Additionally, the narrow interquartile ranges across all settings underscore the stability and resilience of NASK to noisy or incomplete attribute information.

\begin{figure}[htbp]
\centering
\begin{subfigure}[t]{0.49\linewidth}
    \centering
    \includegraphics[width=\linewidth]{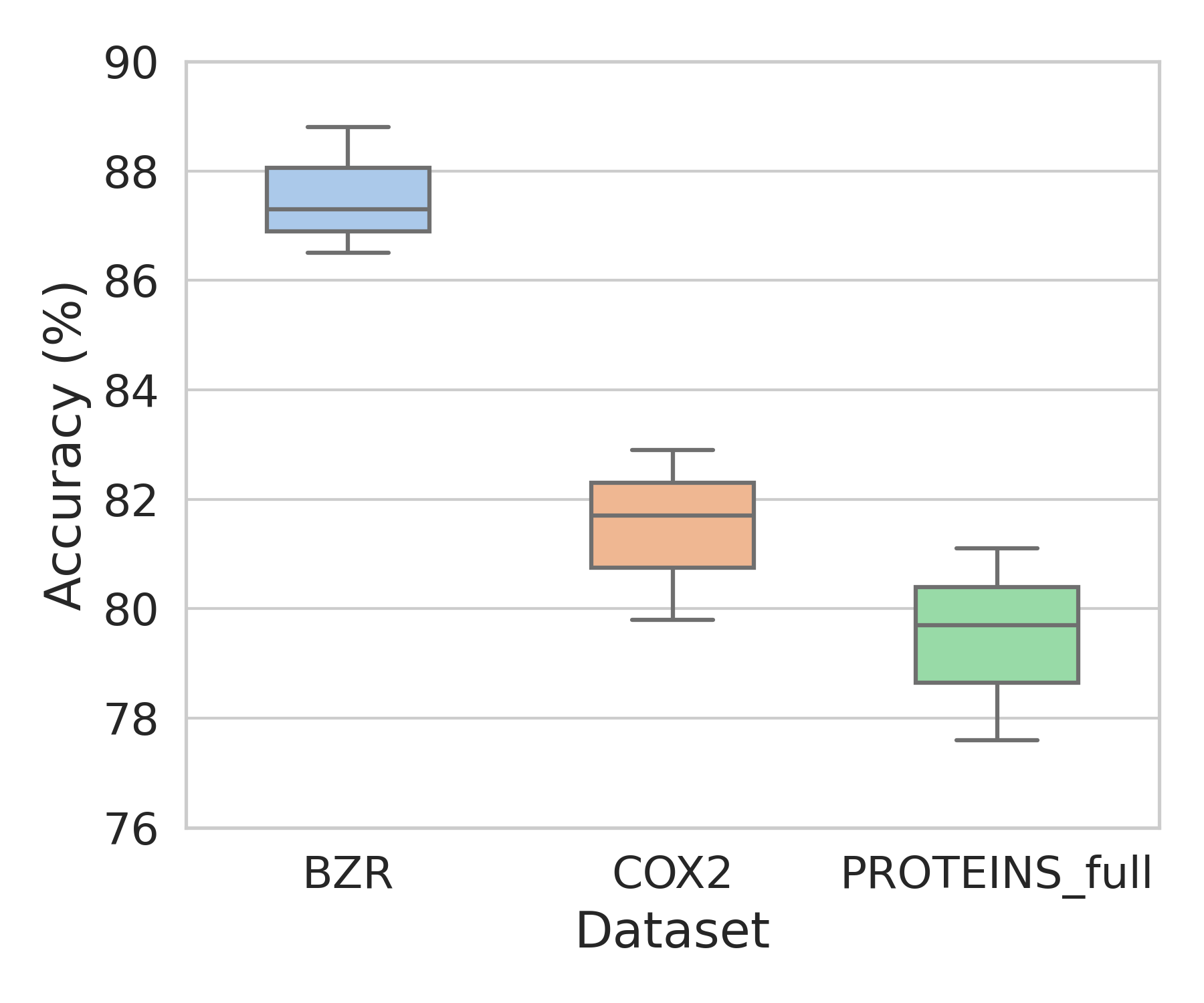}
    \caption{\footnotesize NASK under Gaussian Noise}
    \label{fig:robust-gaussian}
\end{subfigure}
\hfill
\begin{subfigure}[t]{0.49\linewidth}
    \centering
    \includegraphics[width=\linewidth]{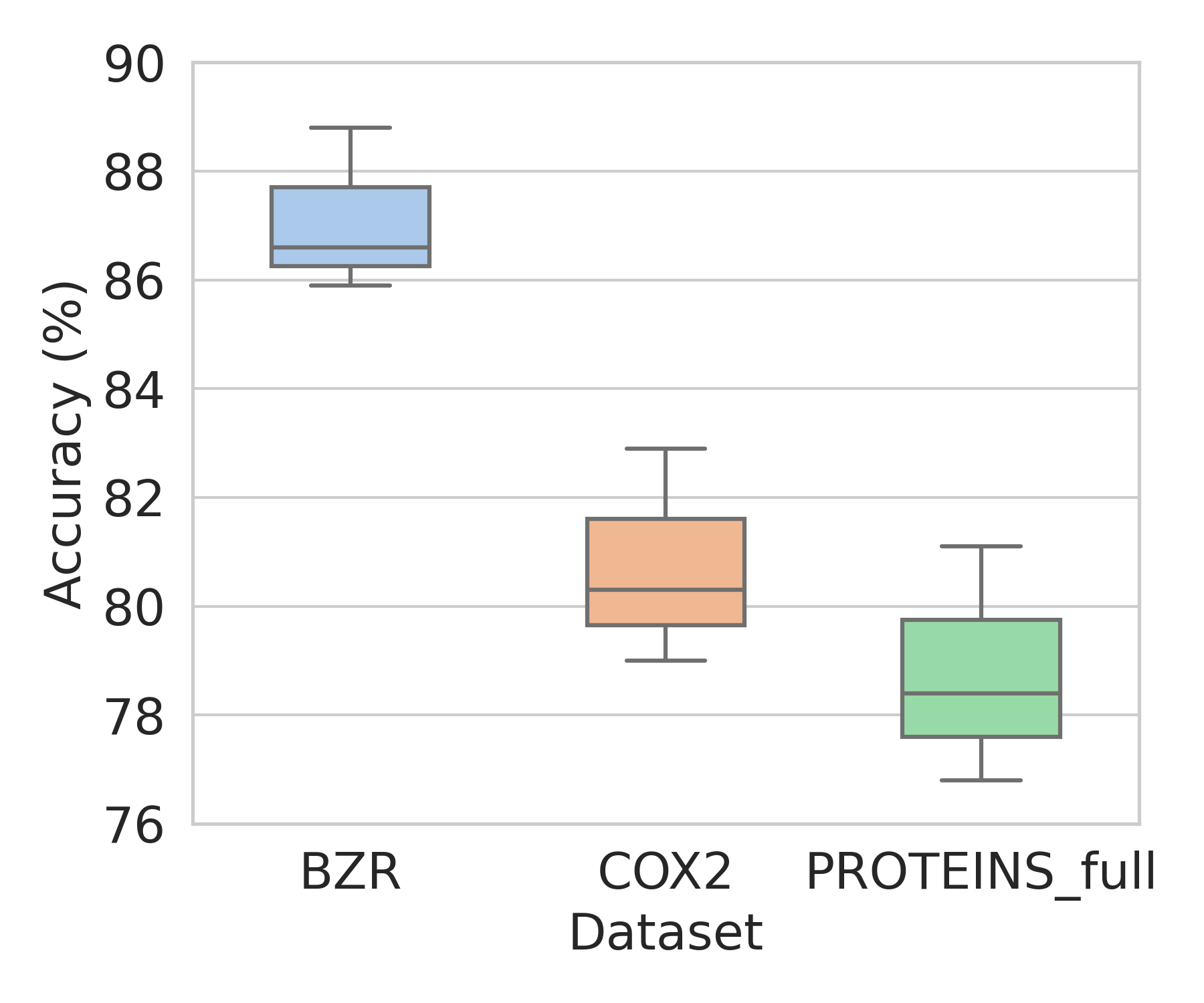}
    \caption{\footnotesize NASK under Feature Masking}
    \label{fig:robust-masking}
\end{subfigure}
\caption{\footnotesize Accuracy of NASK under attribute perturbations.}
\label{fig:robust-bzr-cox2}
\end{figure}

\subsection{Case Study on the NCI1 Dataset}
\begin{figure}[H]
    \centering
    \includegraphics[width=0.5\textwidth]{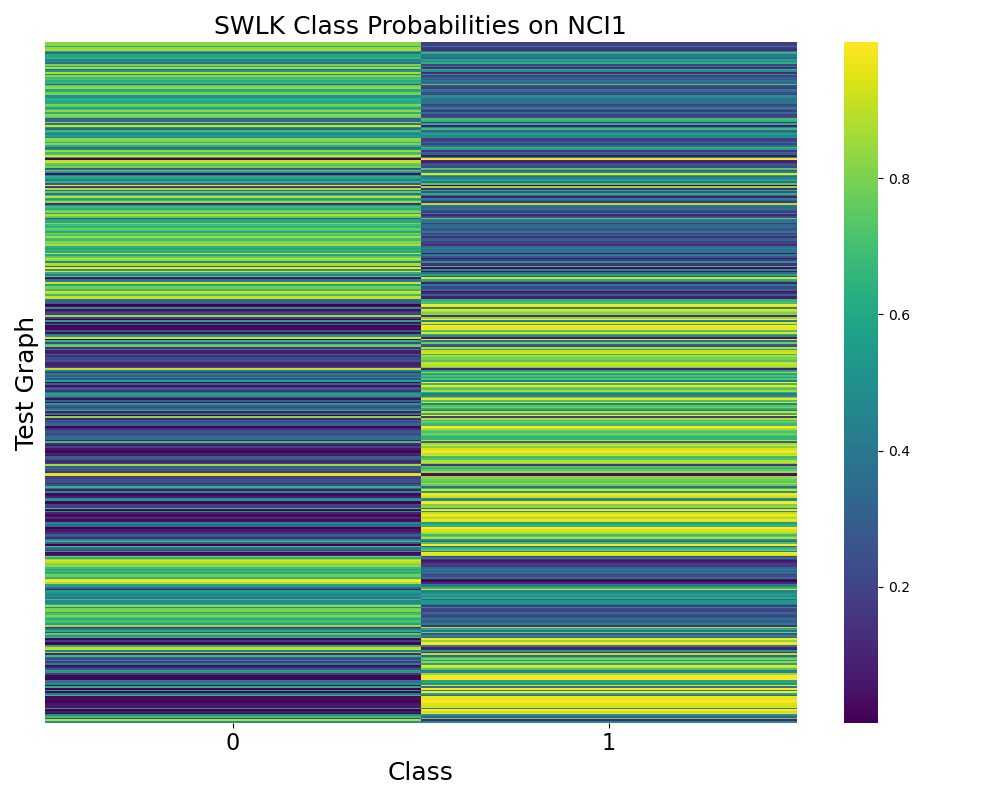}
    \caption{NASK class probability heatmap on the NCI1 dataset.}
    \label{fig:nci1_case}
\end{figure}
To better understand NASK’s classification behavior, we perform a qualitative case study on the NCI1 dataset. As shown in Figure~\ref{fig:nci1_case}, we analyze the predicted class probabilities. The visualization demonstrates how NASK integrates local structural patterns and attribute similarity to produce robust and interpretable predictions.

Figure~\ref{fig:nci1_case} presents a class probability heatmap, where each row corresponds to a test molecular graph and each column to one of the two predicted classes. The color intensity reflects the model’s confidence. Strong vertical bands indicate that NASK tends to assign high probabilities to one class while suppressing the other, demonstrating its discriminative capability. Misclassifications and low-confidence predictions appear as rows with ambiguous shading, yet remain relatively rare. This pattern suggests that NASK is not only accurate but also confident and consistent in its predictions across structurally diverse graphs.

\newpage
\bibliographystyle{unsrt}  
\bibliography{references}

\end{document}